\theoremstyle{definition}
\newtheorem{theorem}{Theorem}
\newtheorem{lemma}{Lemma}
\newtheorem{definition}{Definition}
\begin{document}

\pdfinfo{
/Title (Preferential Structures for Comparative Probabilistic Reasoning)
/Author (Matthew Harrison-Trainor, Wesley H. Holliday, Thomas F. Icard, III) }

\title{Preferential Structures for Comparative Probabilistic Reasoning\thanks{Postprint of AAAI 2017 paper, corrected to include a distinguished subset $W_\succeq$ in Definitions 2-3 and 5 (resp.~$W_+$ before Theorem 3) to match the semantics of Holliday and Icard 2013 (resp.~van der Hoek 1996) when $R(w)=R(v)$ for all $w,v\in W$.}}
\author{Matthew Harrison-Trainor\\
Group in Logic and the Methodology of Science\\
University of California, Berkeley\\
\And
Wesley H. Holliday\\
Department of Philosophy and\\
Group in Logic and the Methodology of Science\\
University of California, Berkeley\\
\AND
Thomas F. Icard, III\\
Department of Philosophy and\\
Symbolic Systems Program\\
Stanford University
}

\maketitle

\begin{abstract}
\vspace{-.08in}%to make room for postprint note
Qualitative and quantitative approaches to reasoning about uncertainty can lead to different logical systems for formalizing such reasoning, even when the language for expressing uncertainty is the same. In the case of reasoning about relative likelihood, with statements of the form $\varphi\succsim\psi$ expressing that $\varphi$ \textit{is at least as likely as} $\psi$, a standard qualitative approach using preordered preferential structures yields a dramatically different logical system than a quantitative approach using probability measures. In fact, the standard preferential approach validates principles of reasoning that are incorrect from a probabilistic point of view. However, in this paper we show that a natural modification of the preferential approach yields exactly the same logical system as a probabilistic approach---not using single probability measures, but rather sets of probability measures. Thus, the same preferential structures used in the study of non-monotonic logics and belief revision may be used in the study of comparative probabilistic reasoning based on imprecise probabilities.
\vspace{-.08in}%to make room for postprint note
\end{abstract}

\section{Introduction}

Central to being an intelligent agent is the ability to reason adequately under, and even about, uncertainty. Such reasoning requires representing this uncertainty in some way. In this paper, we study \emph{qualitative} forms of reasoning about likelihood, a subject that can be traced back to the foundations of mathematical probability theory \cite{Finetti1937}. Such reasoning may concern absolute or relative likelihood, both of which have been studied in some detail within AI. The former involves statements of the form $L\varphi$ expressing that \textit{$\varphi$ is likely} \cite{Halpern1987,Halpern1989}. The latter---our topic in this paper---involves statements of the form $\varphi\succsim\psi$ expressing that \textit{$\varphi$ is at least as likely as $\psi$}.\footnote{A different reading of $\varphi\succsim\psi$ as \textit{belief in $\varphi$ is at least as strong as belief in $\psi$} has been used to motivate another mathematical interpretation \cite{Ghosh2013}.} This representation language has been considered under several different interpretations:
\begin{itemize}
\item \textbf{Probabilistic} interpretation \cite{Segerberg1971,Gardenfors1975,Hoek1996b,Alon2014a,Delgrande2015}, building on the theory of qualitative probability \cite{Finetti1937,Narens2007};
\item \textbf{Possibilistic} interpretation \cite{Cerro1991,Dubois1998,Dubois2003,Touazi2015};
\item \textbf{Preferential} interpretation \cite{Halpern1996,Halpern1997}.
\end{itemize}
Each of these interpretations captures a distinct intuition about relative likelihood reasoning, and they may be appropriate for different settings \cite{Halpern2003}. A natural question is: to what extent are these interpretations compatible with one another? Our focus is on the relationship between the probabilistic and preferential interpretations. Existing approaches based on the preferential interpretation \cite{Halpern1997} depart rather dramatically from those based on probabilistic interpretations---in fact, relating closely to the possibilistic interpretation. We demonstrate how a natural modification of the approach---based on qualitative preferential structures---yields a calculus for reasoning about relative likelihood that is, from a logical point view, indistinguishable from standard forms of probabilistic reasoning. 

Specifically, we prove soundness and completeness for the comparative logic of imprecise probabilities \cite{Alon2014a} with respect to a class of preferential models that can be independently motivated. This shows how the two approaches can be seen as fundamentally compatible. The methods we use to prove these results further lead to a complexity result: the logic itself we prove to be NP-complete. This justifies the intuitive claim that this style of comparative probabilistic reasoning is relatively elementary, being no more complex than ordinary Boolean reasoning. 

\section{Preferential Structures}

Preferential structures have been applied widely across the field of representation and reasoning, including in the study of non-monotonic logics \cite{Shoham1987,Kraus1990}, belief revision \cite{Katsuno1991,Friedman1994}, and conditional belief in games \cite{Board2004,Baltag2008}. Restricted classes of preferential structures are also used for conditional logics \cite{Lewis1973,Friedman1994b}. Here we are interested in the application of these structures to the study of likelihood comparisons, whereby an ordering on a set of states is ``lifted'' to an ordering on the set of events over those states, i.e., on the powerset.

\section{Language}

The language $\mathcal{L}$ of relative likelihood, generated from a set \textsf{Prop} of propositional variables, is given by the grammar
\[\varphi::= p\mid \neg\varphi\mid (\varphi\wedge\varphi)\mid (\varphi\succsim\varphi),\]
where $p\in\mathsf{Prop}$. We define $\vee$, $\to$, $\leftrightarrow$, $\bot$, and $\top$ as usual. The \textit{modal depth} of a formula of $\mathcal{L}$ is defined by:
\begin{enumerate}
\item $md(p)=0$ for $p\in\mathsf{Prop}$;
\item $md(\neg\varphi)=md(\varphi)$;
\item $md(\varphi\wedge\psi)=max\{md(\varphi),md(\psi)\}$;
\item $md(\varphi\succsim\psi)=max\{md(\varphi),md(\psi)\}+1$.
\end{enumerate}
The \textit{length} $|\varphi|$ of $\varphi$ is the number of symbols in $\varphi$.

\section{Preferential Semantics}

All of the models for $\mathcal{L}$ that we will consider will add some extra structure to the following common base models.

\begin{definition}\label{StateSpace} A \textit{state space model} is a tuple $M=\langle W, V\rangle$ where $W$ is a nonempty set and $V\colon \mathsf{Prop}\to \wp(W)$. 
\end{definition}

Preferential models then add a preorder on the space $W$.

\begin{definition} A \textit{preferential model} is an $\mathcal{M}=\langle W,\succeq, V\rangle$ where $\langle W,V\rangle$ is a state space model and $\succeq$ is a preorder (reflexive and transitive binary relation) on a nonempty subset $W_\succeq$ of $W$.
\end{definition}
In a multi-agent setting, one could consider a family $\{\succeq_{w,a}\}_{w\in W,\,a\in Agents}$ of preorders on $W$, but in this paper we can make our essential points with a single preorder.

The idea of the preferential semantics for $\mathcal{L}$ is to lift the order $\succeq$ on $W$ to an order $\succeq^\uparrow$ on $\wp(W)$. A natural way to do so was studied by \cite{Halpern1996,Halpern1997}, following an earlier proposal by \cite{Lewis1973} in a different setting: 
let $A \succeq^l B$ iff for all $b \in B$ there is $a \in A$ such that $a \succeq b$. This is equivalent to there being a function $f\colon B\to A$ that is \emph{inflationary} with respect to $\succeq$, i.e., $f(x)\succeq x$ for all $x\in B$.

\begin{definition}[Inflationary function semantics]\label{def:ifs} Given a preferential model $\mathcal{M}=\langle W,\succeq,V\rangle$, state $w\in W$, and formula $\varphi\in\mathcal{L}$, we define the satisfaction relation $\mathcal{M},w\vDash_l \varphi$ by:
\begin{enumerate}
\item $\mathcal{M},w\vDash_l p$ iff $w\in V(p)$;
\item $\mathcal{M},w\vDash_l \neg\varphi$ iff $\mathcal{M},w\nvDash_l \varphi$;
\item $\mathcal{M},w\vDash_l \varphi\wedge\psi$ iff $\mathcal{M},w\vDash_l \varphi$ and $\mathcal{M},w\vDash_l \psi$;
\item $\mathcal{M},w\vDash_l\varphi\succsim\psi$ iff $\llbracket \varphi\rrbracket^\mathcal{M}\succeq^l \llbracket \psi\rrbracket^\mathcal{M}$,
\end{enumerate}
where $\llbracket \alpha\rrbracket^\mathcal{M} = \{w\in W_\succeq\mid \mathcal{M},w\vDash_l \alpha\}$.
\end{definition}
For this semantics and all those to follow, a formula is \textit{valid} over a class of models according to the semantics iff it is satisfied by every state in every model in the class.

 The motivation for studying the logic of the particular lifting operation above was in part to establish connections with default reasoning and with possibilistic models. In particular, the axiom (\textbf{L4}) below is the central principle in the axiomatization of this class of models.

\begin{definition}\label{ILlogic} The set of theorems of \textsf{IL} (the logic of \textit{inflationary lifting}) is the smallest set of formulas that contains all tautologies of propositional logic, is closed under \textit{modus ponens} (if $\varphi\in\mathsf{IL}$ and $\varphi\to\psi\in\mathsf{IL}$, then $\psi\in\mathsf{IL}$) and \textit{necessitation} (if $\varphi\in\mathsf{IL}$, then $\varphi\succsim\top\in\mathsf{IL}$), and contains all instances of the following axiom schemas:
\begin{enumerate}
\item[] \textbf{(L1)} $\varphi \succsim \varphi$
\item[] \textbf{(L2)} $\big(\bot \succsim (\psi \wedge\neg \varphi) \big) \rightarrow (\varphi \succsim \psi)$
\item[] \textbf{(L3)} $\big((\varphi \succsim \psi) \wedge (\psi \succsim \chi)\big) \rightarrow (\varphi \succsim \chi)$
\item[] \textbf{(L4)} $\big((\varphi \succsim \psi) \wedge (\varphi \succsim \chi)\big) \rightarrow \big(\varphi \succsim (\psi \vee \chi)\big)$
\item[] \textbf{(I1)} $(\varphi\succsim\psi)\to ((\varphi\succsim\psi)\succsim \top)$
\item[] \textbf{(I2)} $\neg(\varphi\succsim\psi)\to (\neg(\varphi\succsim\psi)\succsim \top)$.
\end{enumerate}
\end{definition}

\begin{theorem}[Halpern 2003] The logic \textsf{IL} is sound and complete with respect to preferential models according to the inflationary lifting semantics. \label{halpern}
\end{theorem}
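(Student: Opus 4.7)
My plan is to prove soundness by semantic verification of each axiom schema and completeness via a canonical preferential model built from maximal IL-consistent sets, the crux being a truth lemma for $\succsim$-formulas.

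For soundness, I would directly exhibit the required inflationary function for each axiom. The identity on $\llbracket\varphi\rrbracket^\mathcal{M}$ witnesses L1. For L2, the hypothesis $\bot\succsim(\psi\wedge\neg\varphi)$ forces $\llbracket\psi\wedge\neg\varphi\rrbracket^\mathcal{M}=\emptyset$, whence $\llbracket\psi\rrbracket^\mathcal{M}\subseteq\llbracket\varphi\rrbracket^\mathcal{M}$ and the identity embedding again suffices. L3 follows by composition of inflationary maps. For L4, one glues two inflationary maps $\llbracket\psi\rrbracket^\mathcal{M}\to\llbracket\varphi\rrbracket^\mathcal{M}$ and $\llbracket\chi\rrbracket^\mathcal{M}\to\llbracket\varphi\rrbracket^\mathcal{M}$ by case analysis on $\llbracket\psi\vee\chi\rrbracket^\mathcal{M}=\llbracket\psi\rrbracket^\mathcal{M}\cup\llbracket\chi\rrbracket^\mathcal{M}$. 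For I1 and I2, the satisfaction clause for $\succsim$-formulas does not refer to the evaluation world, so $\llbracket\varphi\succsim\psi\rrbracket^\mathcal{M}$ is either all of $W_\succeq$ or empty, and introspection reduces to the trivial principle that $\chi\succsim\top$ holds whenever $\chi$ is true throughout $W_\succeq$.

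For completeness, given IL-consistent $\varphi_0$, extend $\{\varphi_0\}$ to a maximal IL-consistent set $\Gamma$. The introspection axioms ensure that every $\succsim$-statement is uniformly decided across the model, so I would take the states of a finite canonical model to be the propositional state-descriptions over the variables occurring in $\varphi_0$, set $V$ accordingly, place in $W_\succeq$ those state-descriptions $a$ with $\neg(\bot\succsim\varphi_a)\in\Gamma$, and define $a\succeq b$ iff $\varphi_a\succsim\varphi_b\in\Gamma$. Reflexivity and transitivity of $\succeq$ follow from L1 and L3. Completeness then reduces to a truth lemma by induction on modal depth, whose $\succsim$-clause asserts $\varphi\succsim\psi\in\Gamma$ iff $\llbracket\varphi\rrbracket^\mathcal{M}\succeq^l\llbracket\psi\rrbracket^\mathcal{M}$.

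The main obstacle is this truth-lemma clause, particularly its left-to-right direction. The right-to-left direction is comparatively mild: from atomic witnesses $\varphi_a\succsim\varphi_b\in\Gamma$, one uses the monotonicity derivable from L2 (whenever $\beta\to\alpha$ is a tautology, $\alpha\succsim\beta$ is an IL-theorem) together with transitivity L3 and iterated L4 to aggregate over atoms of $\psi$. The left-to-right direction amounts to a representation lemma: any preorder on a finite Boolean algebra satisfying the algebraic analogues of L1--L4 is the inflationary lift of a preorder on its atoms. The hard descent step is, given $\varphi\succsim\varphi_b\in\Gamma$ for an atomic right-hand side, producing a specific atom $a$ of $\varphi$ with $\varphi_a\succsim\varphi_b\in\Gamma$; as this step has no immediate single-axiom counterpart, I would argue by induction on the disjunctive decomposition of $\varphi$, leveraging the maximal consistency of $\Gamma$ together with repeated L4 and contrapositive L2 applications to preclude all possible refuting configurations.
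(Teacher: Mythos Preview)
The paper does not prove this theorem; it is stated as a citation of Halpern (2003), with no accompanying argument. So there is no in-paper proof against which to compare your attempt.

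On its own merits: your soundness plan is correct and your canonical-model strategy for completeness is the standard one. You also correctly isolate the crux, namely the left-to-right direction of the truth lemma, which comes down to an atom-splitting principle: from $\varphi\succsim\varphi_b\in\Gamma$ with $\varphi$ equivalent to a disjunction of state descriptions, extract a single disjunct $\varphi_a$ with $\varphi_a\succsim\varphi_b\in\Gamma$. However, your proposed mechanism for this step---``repeated \textbf{(L4)} and contrapositive \textbf{(L2)} applications''---does not obviously deliver it. \textbf{(L4)} aggregates joins on the \emph{right} of $\succsim$, not the left, so its contrapositive gives $\neg(\varphi\succsim(\psi\vee\chi))\to\neg(\varphi\succsim\psi)\vee\neg(\varphi\succsim\chi)$, which is the wrong shape; and contrapositive \textbf{(L2)} only yields information of the form $\neg(\bot\succsim\cdot)$. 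None of \textbf{(L1)}--\textbf{(L4)} immediately produces a disjunctive conclusion with $\succsim$-formulas as disjuncts, which is exactly what splitting requires. The principle \emph{is} valid in the semantics (since $\llbracket\varphi_b\rrbracket$ is a singleton, any witness for $\llbracket\varphi\rrbracket\succeq^l\llbracket\varphi_b\rrbracket$ lies in some atom of $\varphi$), so by the very theorem you are proving it must be derivable---but your sketch does not show how, and Halpern's argument handles this point by a different route than the one you gesture at. As written, this is the genuine gap in your plan.
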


While the principle (\textbf{L4}) may be sensible in some contexts, it is patently incompatible with probabilistic reasoning, where taking the union of many improbable events can lead to probable events. A natural question is whether there is an alternative lifting operation that might accord better with probabilistic reasoning. Such a lifting was proposed in \cite{HollidayIcard2013}: instead of requiring an inflationary function $B$ to $A$, let us say that $A \succeq^i B$ iff there is an inflationary \emph{injection} from $B$ to $A$. Call this the \emph{inflationary injection semantics} for the language $\mathcal{L}$. 

\begin{definition}[Inflationary injection semantics]\label{InjectionSemantics} Given a preferential model $\mathcal{M}=\langle W,\succeq,V\rangle$, state $w\in W$, and formula $\varphi\in\mathcal{L}$, we define $\mathcal{M},w\vDash \varphi$ in the same way as Definition \ref{def:ifs}, except with clause 4 replaced by:
\begin{enumerate}
\item[4.] $\mathcal{M},w\vDash\varphi\succsim\psi$ iff $\llbracket \varphi\rrbracket^\mathcal{M}\succeq^i \llbracket \psi\rrbracket^\mathcal{M}$,
\end{enumerate}
where $\llbracket \alpha\rrbracket^\mathcal{M} = \{w\in W_\succeq\mid \mathcal{M},w\vDash \alpha\}$.
\end{definition}

The intuition behind this alternative is clear. If we are trying to determine whether event $A$ is at least as likely as event $B$, we might try to find, for each possible $B$ state, an $A$ state that is as least as likely as it. However, we cannot match multiple $B$ states with the same $A$ state, since together these $B$ states may become more likely than the $A$ state. The inflationary injection semantics generalizes this intuition to require each $B$ state to correspond with its own unique $A$ state that weakly dominates it in likelihood.

We would now like to verify that this intuition does indeed lead to a system for reasoning about relative likelihood that validates intuitive principles. In particular, we would like a completeness theorem along the lines of Theorem \ref{halpern}. In that direction we first introduce a previously studied logic for reasoning about probabilistic semantics for language $\mathcal{L}$.

\section{Probabilistic Semantics}

Probabilistic models replace the preorder $\succeq$ by a set $\mathcal{P}$ of probability measures. A singleton set $\mathcal{P}$ represents \textit{sharp probabilities}, while multiple disagreeing measures in $\mathcal{P}$ can be used to represent \textit{imprecise probabilities}.

\begin{definition} A \textit{multi-measure model} is a tuple $\mathfrak{M}=\langle W,\mathcal{P},V\rangle$ where $\langle W,V\rangle$ is a state space model and $\mathcal{P}$ is a set of finitely additive measures $\mu\colon \wp(W)\to [0,1]$.

A \textit{single-measure model} is a multi-measure model in which $|\mathcal{P}|=1$. If $\mathcal{P}=\{\mu\}$, we write $\mathfrak{M}=\langle W, \mu, V\rangle$.
\end{definition}

Likelihood comparisons $\varphi\succsim\psi$ are now decided by a unanimity rule over $\mathcal{P}$ as follows.

\begin{definition}[Multi-measure semantics]\label{MultiSem} Given a multi-measure model $\mathfrak{M}=\langle W,\mathcal{P},V\rangle$, state $w\in W$, and formula $\varphi\in\mathcal{L}$, we define $\mathfrak{M},w\vDash \varphi$ with the usual clauses for propositional variables and Boolean connectives, plus:
\[\mathfrak{M},w\vDash\varphi\succsim\psi\mbox{ iff }\forall \mu\in\mathcal{P}\colon \mu(\llbracket \varphi\rrbracket^\mathfrak{M})\geq \mu(\llbracket \psi\rrbracket^\mathfrak{M})\]
where $\llbracket \varphi\rrbracket^\mathfrak{M}=\{w\in W\mid \mathfrak{M},w\vDash\varphi\}$.
\end{definition}

The next two lemmas will be useful for our later results. The first reduces formulas to a convenient form.

\begin{lemma}\label{depth0} In each of the semantics (multi-measure, inflationary function, inflationary injection) above, every formula $\varphi\in \mathcal{L}$ is equivalent to a formula $\psi$ of modal depth $\leq 1$. Moreover, we may take $\psi$ to be a disjunction in which the length of each disjunct is $O(|\varphi|)$.
\end{lemma}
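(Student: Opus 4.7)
The key observation, uniform across all three semantics, is that clause 4 of Definitions \ref{def:ifs}, \ref{InjectionSemantics} (and of the multi-measure definition above) makes the truth value of any $\alpha \succsim \beta$ at a state $w$ depend only on the extensions $\llbracket \alpha \rrbracket, \llbracket \beta \rrbracket$ in the model, not on $w$. Thus modal subformulas are rigid, and a standard substitution lemma, proved by induction on formula structure, shows that replacing a subformula $\alpha$ by any $\alpha'$ with $\llbracket \alpha \rrbracket = \llbracket \alpha' \rrbracket$ preserves truth throughout. The plan is to exploit this to reduce every formula to a disjunction of depth-$\leq 1$ formulas by case-splitting on the truth values of its $\succsim$-subformulas.

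Concretely, I would list all $\succsim$-subformulas of $\varphi$ as $\theta_1, \ldots, \theta_n$, writing $\theta_i = \alpha_i \succsim \beta_i$. For each truth assignment $\tau : \{1,\ldots,n\} \to \{0,1\}$ and each $\gamma \in \{\varphi, \alpha_1, \beta_1, \ldots, \alpha_n, \beta_n\}$, let $\gamma^\tau$ denote the purely propositional formula obtained by replacing every maximal modal subformula $\theta_i$ of $\gamma$ by $\top$ if $\tau(i)=1$ and by $\bot$ otherwise. Set
\[
D_\tau \;=\; \varphi^\tau \;\wedge\; \bigwedge_{i:\,\tau(i)=1}(\alpha_i^\tau \succsim \beta_i^\tau) \;\wedge\; \bigwedge_{i:\,\tau(i)=0}\neg(\alpha_i^\tau \succsim \beta_i^\tau),
\]
which has modal depth at most $1$, and take the required $\psi$ to be $\bigvee_\tau D_\tau$.

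To check $\psi \equiv \varphi$ in any model $M$, let $\tau^*$ record the actual truth value of each $\theta_i$ in $M$. By rigidity and substitution, $\llbracket \gamma \rrbracket = \llbracket \gamma^{\tau^*} \rrbracket$ for every relevant $\gamma$, so $D_{\tau^*}$ matches $\varphi$ at every state. For $\tau \neq \tau^*$, I would argue by induction on $\theta_i$ ordered from innermost outward that $D_\tau$ being satisfied anywhere forces $\tau = \tau^*$: when $\theta_j$ is innermost, $\alpha_j$ and $\beta_j$ are propositional, so $\alpha_j^\tau \succsim \beta_j^\tau$ literally equals $\theta_j$ and the $j$-th conjunct pins $\tau(j) = \tau^*(j)$; inductively, once $\tau$ and $\tau^*$ agree on all modals strictly nested in $\theta_i$, substitution gives $\llbracket \alpha_i^\tau \rrbracket = \llbracket \alpha_i \rrbracket$ and similarly for $\beta_i$, so the $i$-th conjunct pins $\tau(i) = \tau^*(i)$.

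The one subtle step---and what I expect to be the main obstacle---is the per-disjunct length bound $|D_\tau| = O(|\varphi|)$, since naively $\sum_i(|\alpha_i| + |\beta_i|)$ can be $\Theta(|\varphi|^2)$ when the $\theta_i$ nest deeply (e.g., $\theta_k = \theta_{k-1} \succsim p_k$). The fix is to observe that inside $D_\tau$ each $\theta_i$ appears only as $\theta_i^\tau = \alpha_i^\tau \succsim \beta_i^\tau$, which retains only those symbols of $\theta_i$ lying outside every proper modal subformula of $\theta_i$, together with one $\top/\bot$ constant per nested child. Consequently each symbol of $\varphi$ contributes to exactly one of $|\varphi^\tau|$ or $|\theta_i^\tau|$ (namely, for the smallest $\theta_i$ containing it), giving $|\varphi^\tau| + \sum_i |\theta_i^\tau| \leq |\varphi| + n$; together with the $O(n)$ connective and negation symbols and the trivial $n \leq |\varphi|$, this yields $|D_\tau| = O(|\varphi|)$ uniformly in $\tau$.
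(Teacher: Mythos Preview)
Your argument is correct and rests on the same idea as the paper's proof: the rigidity of $\succsim$-subformulas (their truth value is state-independent) permits case-splitting on their truth values and substituting $\top/\bot$. The paper carries this out iteratively---at each step replacing one innermost nested $\alpha\succsim\beta$ throughout $\varphi$ by $\top$ and by $\bot$, conjoining the corresponding literal, and noting that each disjunct grows by only a constant---whereas you perform the full case-split over all $\succsim$-subformulas at once and obtain the $O(|\varphi|)$ bound on $|D_\tau|$ via your symbol-partition argument; these are two presentations of the same reduction.
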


\begin{proof} Suppose $md(\varphi)>1$, which implies there is an inequality $\alpha\succsim\beta$ inside the scope of another inequality, where $md(\alpha)=md(\beta)=0$. Define $\varphi':=\varphi'_1\vee \varphi'_2$ by \[\varphi':= (\varphi[\top / \alpha\succsim\beta] \wedge \alpha\succsim\beta)\vee  (\varphi[\bot / \alpha\succsim\beta] \wedge \neg( \alpha\succsim\beta)).\]
For any model $\mathbb{M}$ with domain $W$, either $\llbracket \alpha\succsim\beta\rrbracket^\mathbb{M}=W$ or $\llbracket \alpha\succsim\beta\rrbracket^\mathbb{M}=\varnothing$, which implies that $\llbracket \varphi\rrbracket^\mathbb{M}=\llbracket\varphi'\rrbracket^\mathbb{M}$.  The number of inequalities in the scope of other inequalities in $\varphi'_i$ is less than that in $\varphi$, and the length of $\varphi_i'$ is greater than that of $\varphi$ by a constant amount.  Thus, repeating this process on $\varphi'_1$ and $\varphi'_2$, we eventually obtain the desired $\psi$.\end{proof}

The next lemma bounds the number of measures we need.

\begin{lemma}\label{FinitelyMany} If there is a multi-measure model $\langle W, \mathcal{P},V\rangle$ and $w\in W$ that satisfies  $\varphi$, then there is a $\mathcal{P}_0\subseteq\mathcal{P}$ whose size is $O(|\varphi|)$ such that $w$ satisfies $\varphi$ in $\langle W, \mathcal{P}_0,V\rangle$.
\end{lemma}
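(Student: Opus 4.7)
The plan is to use Lemma \ref{depth0} to reduce $\varphi$ to a short depth-$\leq 1$ formula, and then to build $\mathcal{P}_0$ by retaining exactly one refuting measure for each comparative atom that is false in the original model.

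First I would apply Lemma \ref{depth0} to obtain an equivalent disjunction $\psi_1\vee\cdots\vee\psi_k$ in which each disjunct has modal depth $\leq 1$ and length $O(|\varphi|)$. Since $w$ satisfies $\varphi$ in $\langle W,\mathcal{P},V\rangle$, some disjunct $\psi:=\psi_j$ is satisfied at $w$, so it suffices to preserve $\psi$ on passage to $\mathcal{P}_0$. Because $\psi$ has depth $\leq 1$, every comparative subformula appearing in $\psi$ has the form $\alpha\succsim\beta$ with $\alpha,\beta$ purely propositional, so the extensions $\llbracket\alpha\rrbracket$ and $\llbracket\beta\rrbracket$ are determined by $V$ alone and are unaffected by any change of $\mathcal{P}$. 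Thus the truth value of $\psi$ at $w$ in a multi-measure model over $\langle W,V\rangle$ depends only on the truth values of the $O(|\varphi|)$ comparative atoms occurring in $\psi$.

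Next I would partition these atoms into those true at $w$ in the given model (call the set $T$) and those false (call it $F$). For each $a_i=\alpha_i\succsim\beta_i\in F$, Definition \ref{MultiSem} provides a witness $\mu_i\in\mathcal{P}$ with $\mu_i(\llbracket\alpha_i\rrbracket)<\mu_i(\llbracket\beta_i\rrbracket)$; fix one such $\mu_i$ and set $\mathcal{P}_0:=\{\mu_i : a_i\in F\}$, which has cardinality at most $|F|=O(|\varphi|)$. In the corner case $F=\varnothing$ one may take $\mathcal{P}_0=\varnothing$ (all atoms vacuously stay true) or, if a nonempty set is preferred, any singleton $\{\mu\}\subseteq\mathcal{P}$.

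Finally I would verify that $\psi$, hence $\varphi$, still holds at $w$ in $\langle W,\mathcal{P}_0,V\rangle$. Atoms in $T$ are preserved immediately, since the universal quantifier $\forall\mu\in\mathcal{P}$ in the semantic clause only gets stronger when restricted to $\mathcal{P}_0\subseteq\mathcal{P}$; atoms in $F$ remain false by construction, as each retains its refuting witness; and the Boolean structure of $\psi$ is unaffected because the extensions of its depth-$0$ subformulas depend only on $V$. I do not anticipate a serious obstacle here: the only real subtlety is that nested comparative atoms could in principle require tracking witnesses for subformulas as well, which is precisely why reducing to modal depth $\leq 1$ via Lemma \ref{depth0} at the outset is essential.
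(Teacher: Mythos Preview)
Your proposal is correct and follows essentially the same approach as the paper. The only cosmetic difference is that the paper pushes the disjunct further into a conjunctive form $\neg(\varphi_1\succsim\psi_1)\wedge\cdots\wedge\neg(\varphi_n\succsim\psi_n)\wedge\xi$ to isolate the negated inequalities explicitly, whereas you avoid this by preserving the truth values of \emph{all} comparative atoms (true ones via $\mathcal{P}_0\subseteq\mathcal{P}$, false ones via retained witnesses); both routes rest on the same two observations and yield the same $O(|\varphi|)$ bound.
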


\begin{proof} By Lemma \ref{depth0} and propositional logic, $\varphi$ is equivalent to a disjunction of conjunctions of the form 
\[ \neg (\varphi_1 \succsim \psi_1) \wedge \cdots \wedge \neg (\varphi_n \succsim \psi_n) \wedge \xi\]
where $\xi$ is a conjunction of modal depth 0 formulas and inequalities $\varphi \succsim \psi$ between such formulas. It suffices to satisfy one of the disjuncts, and for this it suffices to have one measure $\mu_i$ for each of the $n$ negated inequalities. Let $\mathcal{P}_0=\{\mu_1,\dots,\mu_n\}$. Since the satisfaction of the positive inequalities in $\xi$ is preserved under discarding measures from $\mathcal{P}$, we have that $\varphi$ is still satisfied by $w$ in $\langle W,\mathcal{P}_0,V\rangle$.
\end{proof}

\section{Logical Systems $\mathsf{SP}$ and $\mathsf{IP}$}

For logics of probability models, we first define the logic of single-measure models. To do so, it will be helpful to introduce some abbreviations from \cite{Segerberg1971,Gardenfors1975}. Given formulas $\varphi_1,\dots,\varphi_n$, $\psi_1,\dots,\psi_n$ and $0\leq k\leq n$, define $C_k$ to be the disjunction of all conjunctions
\[f_1\varphi_1\wedge\dots\wedge f_n\varphi_n\wedge g_1\psi_1\wedge\dots\wedge g_n\psi_n \]
where exactly $k$ of the $f$'s and $k$ of the $g$'s are the empty string, and the rest are $\neg$. Thus, $C_k$ is true at a state $w$ iff exactly $k$ of the $\varphi$'s and $k$ of the $\psi$'s are true at $w$. Then let
\[(\varphi_{1},\ldots,\varphi_{n}) \equiv(\psi_{1},\ldots,\psi_{n}):= C_0\vee\dots\vee C_n,\]
which is true at a state $w$ iff the number of $\varphi$'s true at $w$ is exactly the same as the number of $\psi$'s true at $w$.

\begin{definition} The set of theorems of \textsf{SP} (the logic of \textit{sharp probability}) is the smallest set of formulas that contains all tautologies of propositional logic, is closed under \textit{modus ponens} and \textit{necessitation} as in Definition \ref{ILlogic}, and contains  all instances of \textbf{(I1)}, \textbf{(I2)}, and the following for all $n\in\mathbb{N}$:\footnote{We use the labeling of axioms in \cite{Alon2014a}.}
\begin{enumerate}[leftmargin=0in]
\item[] \textbf{(A0)} $(\varphi\succsim\psi)\vee(\psi\succsim\varphi)$
\item[] \textbf{(A1)} $\varphi\succsim \bot$\quad\textbf{(A2)} $\varphi\succsim\varphi$\quad\textbf{(A3)} $\neg(\bot\succsim\top)$
\item[]\textbf{(A4)}  $\big[ (\varphi_1\succsim\psi_1)\wedge \dots\wedge (\varphi_n\succsim\psi_n) \;\wedge$ \\
$(\varphi_{1},\ldots,\varphi_{n},\varphi') \equiv(\psi_{1},\ldots,\psi_{n},\psi')\succsim\top \big] \to (\psi'\succsim\varphi')$.
\end{enumerate}
\end{definition}

It is easy to see that each axiom of $\mathsf{SP}$ is valid over single-measure models. In the case of \textbf{(A4)},  if the set of states in $\mathfrak{M}$ that make exactly the same number of formulas from $(\varphi_1,\dots,\varphi_n,\varphi')$ true as from $(\psi_1,\dots,\psi_n,\psi')$ has measure 1, and  $\mu(\llbracket \varphi_i\rrbracket^\mathfrak{M})\geq\mu(\llbracket \psi_i\rrbracket^\mathfrak{M})$ for $1\leq i\leq n$, then it is impossible to have $\mu(\llbracket \varphi'\rrbracket^\mathfrak{M})>\mu(\llbracket \psi'\rrbracket^\mathfrak{M})$. The soundness of $\mathsf{SP}$ is thus straightforward, but the completeness of $\mathsf{SP}$ is far less so. The completeness proof in \cite{Segerberg1971,Gardenfors1975} uses a famous representation theorem of \cite{Kraft1959,Scott1964}.

\begin{theorem}[Segerberg 1971, G\"{a}rdenfors 1975] $\mathsf{SP}$ is sound and complete with respect to the class of single-measure models.
\end{theorem}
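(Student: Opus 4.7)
The plan is to prove soundness by inspecting each axiom and rule, and to prove completeness via a canonical construction of a state space combined with the Kraft--Scott--Suppes representation theorem for finite qualitative probability, just as in the original completeness proofs of Segerberg and G\"{a}rdenfors.

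For soundness I would check validity of each axiom and rule in an arbitrary single-measure model $\langle W,\mu,V\rangle$. Axioms \textbf{(A0)}--\textbf{(A3)} are immediate from $\mu(\emptyset)=0$, $\mu(W)=1$, and the total order on the reals; the excerpt already sketches \textbf{(A4)}. Since the truth of an inequality does not depend on the point of evaluation, \textbf{(I1)} and \textbf{(I2)} hold trivially, and necessitation is valid because a theorem holds at every state, hence $\mu(\llbracket\varphi\rrbracket)=1=\mu(\llbracket\top\rrbracket)$.

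For completeness, suppose $\varphi$ is $\mathsf{SP}$-consistent. By Lemma~\ref{depth0} together with propositional reasoning inside $\mathsf{SP}$, I may replace $\varphi$ by a consistent disjunct of the form
\[\pi\wedge\bigwedge_{i\le m}(\gamma_i\succsim\delta_i)\wedge\bigwedge_{j\le n}\neg(\alpha_j\succsim\beta_j),\]
where $\pi$ and all inner formulas are modal-depth-$0$ in some finite set of propositional variables $p_1,\dots,p_k$. Take $W$ to be the set of truth-value assignments to $p_1,\dots,p_k$, with the natural valuation $V$; propositional consistency of $\pi$ supplies a witness state $w\in W$. Define $\succsim^{*}$ on $\wp(W)$ by $A\succsim^{*} B$ iff $\mathsf{SP}\vdash\alpha_A\succsim\alpha_B$, where $\alpha_C$ is any depth-$0$ formula with extension $C$ in $\langle W,V\rangle$ (well-definedness follows because propositional equivalents are interderivable inside inequalities). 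Using \textbf{(A0)}, \textbf{(A2)}, \textbf{(A3)}, and the permutation instance of \textbf{(A4)} (which yields transitivity), $\succsim^{*}$ is a total preorder on $\wp(W)$ with $W\succ^{*}\emptyset$. The Kraft--Scott--Suppes theorem then produces a finitely additive $\mu\colon\wp(W)\to[0,1]$ representing $\succsim^{*}$, and the single-measure model $\langle W,\mu,V\rangle$ together with $w$ satisfies every conjunct of the chosen disjunct by construction.

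The main obstacle is bridging the syntactic schema \textbf{(A4)} with the semantic cancellation condition of Kraft--Scott: whenever $A_1,\dots,A_n,A',B_1,\dots,B_n,B'\subseteq W$ satisfy $\sum_i\mathbf{1}_{A_i}+\mathbf{1}_{A'}=\sum_i\mathbf{1}_{B_i}+\mathbf{1}_{B'}$ pointwise on $W$ and $A_i\succsim^{*} B_i$ for each $i$, then $B'\succsim^{*} A'$. The key observation is that, since $W$ is the full truth-table space for the relevant variables, pointwise equality of indicator sums coincides with $(\alpha_{A_1},\dots,\alpha_{A_n},\alpha_{A'})\equiv(\alpha_{B_1},\dots,\alpha_{B_n},\alpha_{B'})$ being propositionally valid on $W$; necessitation then supplies the $\succsim\top$ antecedent of \textbf{(A4)}, and the schema delivers exactly the required implication. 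Once this correspondence is in place, Kraft--Scott closes out the completeness proof.
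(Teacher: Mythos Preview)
Your overall strategy matches what the paper attributes to Segerberg and G\"ardenfors: soundness by direct verification, completeness via the Kraft--Scott representation theorem. The paper does not itself reproduce the argument. However, your execution contains a genuine gap.

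You define $A\succsim^{*}B$ iff $\mathsf{SP}\vdash\alpha_A\succsim\alpha_B$, i.e., by bare $\mathsf{SP}$-provability. This discards the specific consistent conjunct $\pi\wedge\bigwedge_i(\gamma_i\succsim\delta_i)\wedge\bigwedge_j\neg(\alpha_j\succsim\beta_j)$ you are trying to satisfy. The positive conjuncts $\gamma_i\succsim\delta_i$ are typically consistent with but \emph{not} provable in $\mathsf{SP}$, so there is no reason the resulting measure $\mu$ validates them; likewise the negated conjuncts $\neg(\alpha_j\succsim\beta_j)$ are never used. Your claim that $\succsim^{*}$ is total fails for the same reason: axiom $(\textbf{A0})$ yields only $\mathsf{SP}\vdash(\alpha_A\succsim\alpha_B)\vee(\alpha_B\succsim\alpha_A)$, not that $\mathsf{SP}$ proves one of the disjuncts outright. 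The fix is standard: work relative to a maximal $\mathsf{SP}$-consistent set $\Gamma$ extending your chosen conjunct (a finite Lindenbaum-style saturation suffices, since only finitely many inequalities among subsets of $W$ are at issue), and define $A\succsim^{*}B$ iff $\alpha_A\succsim\alpha_B\in\Gamma$. With this correction, totality follows from $(\textbf{A0})$ plus maximality, the cancellation condition follows from $(\textbf{A4})$ exactly as you outline, and the Kraft--Scott measure then represents $\Gamma$ and hence satisfies the original disjunct at $w$.
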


As shown by van der Hoek \cite{Hoek1996b}, $\mathsf{SP}$ can be given a simple semantics using what we will call \textit{distinguished state models} $M=(W,W_+,V)$ where $(W,V)$ is a state space model and $\varnothing\neq W_+\subseteq W$. Then we define
\[M,w\vDash \varphi\succsim \psi\mbox{ iff }| \llbracket \varphi \rrbracket^M | \geq | \llbracket \psi\rrbracket^M®  |,\]
where $\llbracket \varphi \rrbracket^M=\{w\in W_+\mid M,w\vDash \varphi\}$ and $|\cdot|$ is cardinality.

\begin{theorem}[van der Hoek 1996] $\mathsf{SP}$ is sound and complete with respect to the class of finite distinguished state models with the cardinality semantics above.
\end{theorem}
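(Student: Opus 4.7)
My plan is to reduce van der Hoek's theorem to the Segerberg--G\"{a}rdenfors completeness theorem for $\mathsf{SP}$, using a rationalization-plus-rescaling argument to turn a real-valued probability measure into an integer-valued count. The key observation driving both directions is that a finite distinguished state model $M=(W,W_+,V)$ is essentially a single-measure model in disguise, equipped with the uniform counting measure $\mu(A)=|A\cap W_+|/|W_+|$. Under this correspondence, $|\llbracket\varphi\rrbracket^M|\geq|\llbracket\psi\rrbracket^M|$ in the cardinality sense is equivalent to $\mu(\llbracket\varphi\rrbracket^{M})\geq\mu(\llbracket\psi\rrbracket^{M})$ in the single-measure sense, so by induction the two satisfaction relations agree on $M$, and soundness of $\mathsf{SP}$ is immediate from the Segerberg--G\"{a}rdenfors soundness.

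For completeness, suppose $\varphi$ is $\mathsf{SP}$-consistent. By Segerberg--G\"{a}rdenfors, there is a single-measure model $\mathfrak{M}=(W,\mu,V)$ and a state $w$ with $\mathfrak{M},w\vDash\varphi$. By Lemma \ref{depth0} I may replace $\varphi$ by an equivalent disjunction of depth-$\leq 1$ formulas and focus on a disjunct satisfied at $w$; after pushing to a DNF form I may assume this disjunct has the shape $\varphi_0\wedge\bigwedge_i(\alpha_i\succsim\beta_i)\wedge\bigwedge_j\neg(\gamma_j\succsim\delta_j)$, with $\varphi_0,\alpha_i,\beta_i,\gamma_j,\delta_j$ all of modal depth $0$. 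The depth-$0$ formulas appearing here partition $W$ into finitely many cells $E_1,\ldots,E_k$, and the inequality conjuncts, together with $\sum_\ell\mu(E_\ell)=1$ and $\mu(E_\ell)\geq 0$, form a finite system of weak and strict linear (in)equalities with $0$--$1$ coefficients in the variables $x_\ell=\mu(E_\ell)$, which is satisfied in $\mathbb{R}^k$ by the tuple $(\mu(E_1),\ldots,\mu(E_k))$.

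The crux is now rationalization: tightening each strict inequality $Cx<d$ to $Cx\leq d-\varepsilon$ for a sufficiently small rational $\varepsilon>0$ (namely, any $\varepsilon$ below half the positive slack of the real solution on every strict constraint) yields a non-empty rational polytope contained in the simplex, which necessarily has rational vertices and therefore contains a rational solution $(q_1,\ldots,q_k)$. Clearing a common denominator $N$, I obtain non-negative integers $n_\ell=Nq_\ell$ and construct a finite distinguished state model $M'=(W',W_+',V')$ by placing $n_\ell$ copies of a chosen representative of each cell $E_\ell$ into $W_+'$ with its inherited valuation, and, if necessary, adjoining $w$ as an extra non-distinguished state carrying its original valuation to secure $\varphi_0$. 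By construction $|\llbracket\sigma\rrbracket^{M'}|=N\cdot\mu(\llbracket\sigma\rrbracket^{\mathfrak{M}})$ for every depth-$0$ subformula $\sigma$ appearing in the disjunct, so every weak and every strict inequality in the disjunct is preserved under the cardinality semantics, and $w$ satisfies $\varphi$ in $M'$. I expect the main technical obstacle to be handling the strict inequalities cleanly---hence the $\varepsilon$-slack trick rather than a bare density argument---together with the small bookkeeping of placing $w$ so that its depth-$0$ conjunct $\varphi_0$ is honored even when $\varphi_0$ mentions propositional variables outside the inequality part.
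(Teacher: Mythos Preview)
The paper does not actually prove this theorem; it is cited as van der Hoek's result, and the only proof-relevant content in the paper is the one-sentence sketch following Lemma~\ref{Hoek}: ``The transformation first rationalizes the probability measure, then normalizes the measure so that it takes on only integer values \ldots, and finally duplicates each point in the state space that has nonzero measure with a number of points equal to its measure.'' Your proposal is correct and is essentially a fleshed-out version of exactly this route: soundness via the uniform counting measure, and completeness by rationalizing the measure (you make the linear-algebra content explicit via the $\varepsilon$-slack argument) and then cloning states according to the integer weights.

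The one substantive difference is granularity. Van der Hoek's transformation, as described after Lemma~\ref{Hoek}, starts from a \emph{finite} single-measure model and rationalizes the measure on \emph{all} states at once, yielding a distinguished state model that agrees with $\mathfrak{M}$ on \emph{every} formula of $\mathcal{L}$. Your construction is formula-relative: you partition $W$ into the finitely many cells determined by the depth-$0$ subformulas of the particular $\varphi$ at hand, and rationalize only the cell-measures. This buys you that you never need $W$ itself to be finite (you only need finitely many cells, which is automatic), at the cost of producing a model tailored to $\varphi$ rather than one equivalent to $\mathfrak{M}$ for all formulas. For the bare completeness statement either is sufficient.

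One small slip: in your final sentence you write $|\llbracket\sigma\rrbracket^{M'}|=N\cdot\mu(\llbracket\sigma\rrbracket^{\mathfrak{M}})$, but after rationalization the cardinality equals $N\cdot q(\llbracket\sigma\rrbracket)$ for your rational solution $q$, not $N\cdot\mu$. What you actually need---and what your $\varepsilon$-slack argument delivers---is only that $q$ satisfies the same weak and strict inequalities among the $\llbracket\sigma\rrbracket$'s that $\mu$ does, not that $q=\mu$. With that correction the argument goes through.
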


The logic $\mathsf{SP}$ is not sound, however, with respect to all \textit{multi}-measure models, as axiom $(\textbf{A0})$ is clearly invalid; for one measure $\mu\in\mathcal{P}$ we may have $\mu(\llbracket \varphi\rrbracket^\mathfrak{M})>\mu(\llbracket \psi\rrbracket^\mathfrak{M})$ while for another measure $\mu'\in\mathcal{P}$ we may have $\mu'(\llbracket \psi\rrbracket^\mathfrak{M})>\mu'(\llbracket \varphi\rrbracket^\mathfrak{M})$, in which case neither $\varphi\succsim\psi$ nor $\psi\succsim\varphi$ is true in $\mathfrak{M}$ according to Definition \ref{MultiSem}. Thus, for multi-measure models we must drop the axiom $(\textbf{A0})$; but if we only drop $(\textbf{A0})$ from $\mathsf{SP}$, then the resulting logic will be sound but not \textit{complete} with respect to multi-measure models \cite{HHI2016b}. We must not only drop $(\textbf{A0})$ but also strengthen $(\textbf{A4})$ as follows.

\begin{definition} The set of theorems of $\mathsf{IP}$ (the logic of \textit{imprecise probability}) is defined in the same way as $\mathsf{SP}$ except without axiom \textbf{(A0)} and with \textbf{(A4)} replaced by:
\begin{itemize}[leftmargin=0in]
\item[]\textbf{(A4')}  $\big[ (\varphi_1\succsim\psi_1)\wedge \dots\wedge (\varphi_n\succsim\psi_n) \;\wedge$
\begin{eqnarray*}
(\varphi_{1},\ldots,\varphi_{n},\underbrace{\varphi',\ldots,\varphi'}_{k\text{ times}}) \equiv(\psi_{1},\ldots,\psi_{n},\underbrace{\psi',\ldots,\psi'}_{k\text{ times}})\succsim\top \big] \\
\to (\psi'\succsim\varphi').
\end{eqnarray*}
\end{itemize}
\end{definition}

Again soundness is not difficult, while completeness relies on a representation theorem in \cite{Alon2014}.

\begin{theorem}[Alon and Heifetz 2014]  $\mathsf{IP}$ is sound and complete with respect to the class of all multi-measure models.
\end{theorem}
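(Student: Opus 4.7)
The plan is to separate soundness and completeness. Soundness of the rules and of the axioms \textbf{(A1)}--\textbf{(A3)}, \textbf{(I1)}, \textbf{(I2)} is inherited pointwise in $\mathcal{P}$ from the single-measure case, so the only novel check is \textbf{(A4')}. Fix $\mu\in\mathcal{P}$: the premise $(\varphi_1,\ldots,\varphi_n,\varphi',\ldots,\varphi')\equiv(\psi_1,\ldots,\psi_n,\psi',\ldots,\psi')\succsim\top$ forces the $\mu$-almost-sure pointwise identity of integer-valued indicator sums
\[\sum_{i=1}^n \mathbb{1}_{\llbracket\varphi_i\rrbracket} + k\,\mathbb{1}_{\llbracket\varphi'\rrbracket} \;=\; \sum_{i=1}^n \mathbb{1}_{\llbracket\psi_i\rrbracket} + k\,\mathbb{1}_{\llbracket\psi'\rrbracket}.\]
Integrating this identity and combining with the $n$ premises $\mu(\llbracket\varphi_i\rrbracket)\geq\mu(\llbracket\psi_i\rrbracket)$ yields $k\mu(\llbracket\varphi'\rrbracket)\leq k\mu(\llbracket\psi'\rrbracket)$, hence $\psi'\succsim\varphi'$.

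For completeness, I show that every $\mathsf{IP}$-consistent formula is satisfiable in a multi-measure model. A syntactic version of Lemma~\ref{depth0}---routinely obtained using \textbf{(I1)} and \textbf{(I2)} to internalise the globality of $\succsim$---reduces the problem to satisfying a consistent conjunction
\[\Phi := \bigwedge_{i=1}^n (\varphi_i\succsim\psi_i)\;\wedge\;\bigwedge_{j=1}^m \neg(\varphi'_j\succsim\psi'_j)\;\wedge\;\xi,\]
where $\xi$ is propositional and the $\varphi_i,\psi_i,\varphi'_j,\psi'_j$ all have modal depth $0$. Take $W$ to be the finite set of propositional valuations of the atoms occurring in $\Phi$, equip it with the natural $V$, and fix $w\in W$ satisfying $\xi$ (such a $w$ exists because $\Phi$ is propositionally consistent). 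For each $j\leq m$, I aim to construct a finitely additive probability measure $\mu_j$ on $\wp(W)$ that respects every positive inequality yet strictly falsifies the $j$-th negated one; then $\mathcal{P}=\{\mu_1,\ldots,\mu_m\}$ and $\langle W,\mathcal{P},V\rangle,w \vDash \Phi$.

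The construction of $\mu_j$ is the heart of the argument, and it is exactly what the Alon--Heifetz representation theorem delivers. In contrapositive form: if no probability $\mu$ on $W$ simultaneously satisfies $\mu(\llbracket\varphi_i\rrbracket)\geq\mu(\llbracket\psi_i\rrbracket)$ for every $i$ and $\mu(\llbracket\varphi'_j\rrbracket)<\mu(\llbracket\psi'_j\rrbracket)$, then $\Phi$ must already be $\mathsf{IP}$-inconsistent. Applying Motzkin's transposition theorem on $\mathbb{R}^W$, and scaling the resulting rational dual witnesses to integers, produces non-negative integers $k\geq 1$ and $\lambda_1,\ldots,\lambda_n$ and an integer-valued slack function $f\colon W\to\mathbb{N}$ such that
\[\sum_{i=1}^n \lambda_i\bigl(\mathbb{1}_{\llbracket\varphi_i\rrbracket} - \mathbb{1}_{\llbracket\psi_i\rrbracket}\bigr) + k\bigl(\mathbb{1}_{\llbracket\psi'_j\rrbracket} - \mathbb{1}_{\llbracket\varphi'_j\rrbracket}\bigr) + f = 0\]
pointwise on $W$. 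For each $v\in W$ let $\rho_v$ be its characteristic propositional formula, so $\llbracket\rho_v\rrbracket=\{v\}$; then $\mathsf{IP}\vdash\rho_v\succsim\bot$ by \textbf{(A1)}. Adjoining $f(v)$ copies of $\rho_v\succsim\bot$ to the list of positive inequalities converts the displayed equation into a pointwise count equality between two multisets of formulas---a propositional tautology, hence derivably $\succsim\top$ by necessitation. This is precisely the antecedent of an instance of \textbf{(A4')} (taken with $\psi':=\varphi'_j$ and $\varphi':=\psi'_j$), whose consequent $\varphi'_j\succsim\psi'_j$ contradicts consistency of $\Phi$ with $\neg(\varphi'_j\succsim\psi'_j)$.

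The main obstacle is the passage from real-valued LP duality to the integer-coefficient object-language form of \textbf{(A4')}: one must argue that the dual witnesses can be taken rational (and then scaled to integers), and then pad the pointwise inequality into an exact equality via the auxiliary indicators $\rho_v$. A secondary subtlety is the syntactic depth reduction, for which Lemma~\ref{depth0} delivers only semantic equivalence; the axioms \textbf{(I1)} and \textbf{(I2)}, which internalise the rigidity of each $\succsim$-formula, are what bridge this gap at the syntactic level.
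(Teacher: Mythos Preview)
The paper does not give its own proof of this theorem: it is stated as a result of Alon and Heifetz, with only the remark that ``soundness is not difficult, while completeness relies on a representation theorem in \cite{Alon2014}.'' So there is no in-paper argument to compare against; your proposal is being measured against the Alon--Heifetz proof itself.

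Your sketch is essentially that proof. The soundness check for \textbf{(A4')} via integrating the $\mu$-a.e.\ indicator identity is exactly the standard verification. For completeness, reducing to depth $\leq 1$ via \textbf{(I1)}--\textbf{(I2)}, taking $W$ to be the finite Boolean atoms, and then, for each negated comparison $\neg(\varphi'_j\succsim\psi'_j)$, invoking a theorem of the alternative (Motzkin/Stiemke) on the rational polyhedron of probability vectors to either produce the witnessing measure $\mu_j$ or extract integer dual multipliers that instantiate \textbf{(A4')}---this is precisely the Alon--Heifetz strategy, and the Scott/Kraft--Pratt--Seidenberg strategy in the single-measure case before it. Your use of the singleton formulas $\rho_v\succsim\bot$ to absorb the nonnegativity slack $f$ into the balanced-sequence antecedent is the correct bookkeeping move.

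Two small points. First, when $m=0$ (no negated comparisons) your construction yields $\mathcal{P}=\varnothing$, which falsifies \textbf{(A3)}; you should either adjoin the provable conjunct $\neg(\bot\succsim\top)$ before normalising, or run the same duality argument with the strict constraint $\langle \mathbf{1},\mu\rangle>0$ to obtain a single measure respecting all the positive inequalities (infeasibility then yields an \textbf{(A4')} instance concluding $\bot\succsim\top$, contradicting \textbf{(A3)}). Second, the passage from ``$\Phi$ is $\mathsf{IP}$-consistent'' to ``$\xi$ is propositionally satisfiable'' deserves one line: if $\xi$ were unsatisfiable, $\neg\xi$ would be a theorem and hence $\neg\Phi\in\mathsf{IP}$.
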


\section{Main Result}

Our main result is to prove that the logic one obtains from finite preferential models with the inflationary injection semantics is exactly the logic $\mathsf{IP}$ of imprecise probability that one obtains from multi-measure models.

\begin{theorem} $\mathsf{IP}$ is sound and complete with respect to the class of finite preferential models with the inflationary injection semantics. Moreover, it is complete with respect to the class of finite preferential models in which $\succeq$ is a \textit{total} preorder on $W_\succeq$ (i.e., for all $w,v\in W_\succeq$, $w\succeq v$ or $v\succeq w$).\end{theorem}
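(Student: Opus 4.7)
The plan is to handle soundness axiomatically and, for completeness, to convert an Alon--Heifetz multi-measure countermodel into a finite \emph{totally} preordered preferential model, so that both clauses of the theorem are established by a single construction. For soundness, I exploit the Hall's-theorem characterization that $A\succeq^i B$ iff $|A\cap U|\geq|B\cap U|$ for every upper set $U$ of $\succeq$ inside $W_\succeq$. The only nontrivial axiom is (A4'): the inflationary injections witnessing $\varphi_i\succsim\psi_i$ give $|\llbracket\varphi_i\rrbracket\cap U|\geq|\llbracket\psi_i\rrbracket\cap U|$ at every upper $U$, and the count-equivalence conjunct ``$\succsim\top$'' provides a per-state equality that sums over $U$ to
\[\sum_{i=1}^n|\llbracket\varphi_i\rrbracket\cap U|+k|\llbracket\varphi'\rrbracket\cap U|=\sum_{i=1}^n|\llbracket\psi_i\rrbracket\cap U|+k|\llbracket\psi'\rrbracket\cap U|.\]
Subtracting yields $|\llbracket\varphi'\rrbracket\cap U|\leq|\llbracket\psi'\rrbracket\cap U|$, so Hall produces the inflationary injection witnessing $\psi'\succsim\varphi'$; the other axioms are immediate.

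For completeness, given consistent $\varphi$ I invoke Alon--Heifetz to obtain $\mathfrak{M}=\langle W,\mathcal{P},V\rangle$ with $\mathfrak{M},w\vDash\varphi$, then normalize: Lemma~\ref{depth0} lets me assume $md(\varphi)\leq 1$, Lemma~\ref{FinitelyMany} lets me take $\mathcal{P}=\{\mu_1,\ldots,\mu_n\}$ finite, and a standard quotient-plus-rational-approximation step makes $W$ finite with $\mu_i(\{w\})=k_i^w/N$ for a common denominator $N$. I put $\varphi$ in DNF, fix a satisfied disjunct $D$, and for each negation $\neg(\alpha\succsim\beta)$ in $D$ fix a witness index $i^\ast$ with $\mu_{i^\ast}(\llbracket\alpha\rrbracket)<\mu_{i^\ast}(\llbracket\beta\rrbracket)$. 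I then build $\mathcal{M}'$ by choosing rapidly growing scales $c_i=(4N)^i$ and setting $W'_\succeq=\bigsqcup_{i=1}^nS_i$, where $S_i$ contains $c_ik_i^w$ copies of each $w$ and a copy of $w$ satisfies each propositional variable exactly when $w$ does. The total preorder is $S_1\succ\cdots\succ S_n$ with full equivalence within each $S_i$, so the upper sets are exactly the cumulative unions $U_k=S_1\cup\cdots\cup S_k$ and $|\llbracket\alpha\rrbracket^{\mathcal{M}'}\cap U_k|=\sum_{i\leq k}c_iN\mu_i(\llbracket\alpha\rrbracket^{\mathfrak{M}})$. Each positive inequality holds at every $U_k$ because the summands are non-negative, and each negative one fails at $U_{i^\ast}$ because the $i^\ast$-th term contributes at least $c_{i^\ast}$ in magnitude (gap $\geq 1/N$) while the exponential growth of $c_i$ keeps the sum of earlier terms below $c_{i^\ast}/2$. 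Adjoining a single state outside $W'_\succeq$ with $w$'s propositional type (needed only when no $\mu_i$ charges that type) gives a point at which $\varphi$ holds in $\mathcal{M}'$.

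The main obstacle is exactly this balance on a single \emph{chain} of upper sets: positive inequalities must be preserved at every one while each negative inequality needs one of them to exhibit a strict reversal. The exponential blow-up $c_i=(4N)^i$ is the device that forces the distinguished $i^\ast$-th term to dominate the cumulative sum, and it is what lets a single totally preordered model simulate the entire multi-measure model, delivering both clauses of the theorem at once.
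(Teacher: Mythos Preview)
Your argument is correct, and the underlying mathematics coincides with the paper's, but your packaging is somewhat different in both halves.

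For soundness, the paper invokes an external combinatorial lemma (from \cite{HHI2016}) about balanced sequences and inflationary injections to handle \textbf{(A4')}. Your route---characterizing $A\succeq^i B$ via Hall's marriage theorem as $|A\cap U|\ge|B\cap U|$ for every upper set $U$, then summing the per-state count identity over $U$---is more self-contained and arguably cleaner; it is in fact the content of that external lemma, unpacked.

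For completeness, the paper proceeds in two stages: first (Lemma~\ref{bridge}) it takes a disjoint union of van der Hoek distinguished-state models, one per measure, with $\succeq$ an equivalence relation on each block (so not yet total); then it performs a second transformation $\mathcal{M}^t$ that inductively replicates block $W_{n+1}^\#$ by $|\bigcup_{i\le n}W_i^t|+1$ copies to force the total order. You collapse both stages into one: rationalize to a common denominator $N$, then directly build the totally preordered chain $S_1\succ\cdots\succ S_n$ with geometric scales $c_i=(4N)^i$. The key device---making each block large enough to swamp the cumulative size of all higher blocks so that a strict inequality at a single measure survives at the corresponding upper set---is the same in both. The paper's two-step version separates the ``simulate multiple measures'' idea from the ``totalize'' idea, which makes the first completeness clause (arbitrary preorders) fall out of the intermediate model; your version is more economical but only produces the total model, from which the first clause follows a fortiori.
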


We will first prove soundness in the next section and then completeness in the following section.

\section{Soundness of $\mathsf{IP}$}\label{sec-sound}

In proving the soundness of \textsf{IP}, the main task is show that the axiom \textbf{(A4')} is valid according to the inflationary injection semantics. To do so, we will use a result about inflationary injections from \cite{HHI2016}. This result involves the following relation between sequences of sets, which matches the relation expressed by the second line in the display of axiom \textbf{(A4')} above.

\begin{definition}
Let $S$ be a finite set. For any two sequences $\langle E_1,\dots,E_k\rangle$ and $\langle F_1,\dots,F_k\rangle$ of events from $\mathcal{P}(S) $, 
\[\langle E_1,\dots,E_k\rangle=_0\langle F_1,\dots,F_k\rangle\]
if and only if for all $s\in S$, the cardinality of $\{i\mid s\in E_i\}$ is equal to the cardinality of $\{i\mid s\in F_i\}$.
\end{definition} 

If $\langle E_1,\dots,E_k\rangle=_0\langle F_1,\dots,F_k\rangle$, then we say that the two sequences are \textit{balanced}; every state appears the same number of times on the left side as on the right side. 

The following lemma is easily seen to follow from Lemma 3.6 and the proof of Lemma 3.7 in \cite{HHI2016}.

\begin{lemma}[Harrison-Trainor, Holliday, and Icard 2016]\label{mainlem1}
Let $\langle W,\succeq\rangle$ be a finite preorder. If \[\langle E_1,\dots, E_n,\underbrace{A,\dots,A}_{r\,\mathrm{ times}}\rangle =_0 \langle F_1,\dots , F_n,\underbrace{B,\dots,B}_{r\,\mathrm{ times}}\rangle\] are balanced sequences of subsets of $W$, and there is an inflationary injection $f_i \colon F_i \to E_i$ for all $i$, then there is an inflationary injection $g\colon A \rightarrow B$.
\end{lemma}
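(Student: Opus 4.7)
The plan is to reformulate the desired conclusion as a matching problem and solve it with Hall's marriage theorem, using the balance condition as a counting identity.

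First I reduce the claim to a bipartite matching question. An inflationary injection $g\colon A\to B$ is exactly a matching saturating $A$ in the bipartite graph with parts $A$ and $B$ and edges $\{(a,b)\in A\times B : b\succeq a\}$. By Hall's marriage theorem it suffices to verify that for every $A'\subseteq A$,
\[ |N(A')| \geq |A'|, \qquad \text{where } N(A') = \{b\in B : b\succeq a \text{ for some } a\in A'\}. \]

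The central idea is to probe the hypotheses with the upward closure $U = \{w\in W : w\succeq a \text{ for some } a\in A'\}$. Because $\succeq$ is transitive, $U$ is upward-closed. Each given $f_i\colon F_i\to E_i$ is inflationary, so $x\in F_i\cap U$ implies $f_i(x)\succeq x\in U$, hence $f_i(x)\in E_i\cap U$; since $f_i$ is injective, this yields
\[ |F_i\cap U| \;\leq\; |E_i\cap U| \qquad (1\leq i\leq n). \]

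Now I cash in the balance condition. For each $w\in W$,
\[ |\{i: w\in E_i\}| + r\cdot \mathbf{1}_{w\in A} \;=\; |\{i: w\in F_i\}| + r\cdot \mathbf{1}_{w\in B}. \]
Summing over $w\in U$ and switching order of summation gives
\[ \sum_{i=1}^n |E_i\cap U| + r\,|A\cap U| \;=\; \sum_{i=1}^n |F_i\cap U| + r\,|B\cap U|. \]
Combining with the inequality from the previous step and dividing by $r$ (assuming $r\geq 1$, which is the only nontrivial case) yields $|A\cap U|\leq |B\cap U|$. Since $A'\subseteq A\cap U$ and $B\cap U = N(A')$, this is Hall's condition $|A'|\leq |N(A')|$, which completes the argument.

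The only nonroutine step is realizing that the right test set in Hall's condition is the upward closure of $A'$ in the full space $W$, not merely within $A$ or $B$; this is what allows the inflationary property of each $f_i$ to convert into a cardinality inequality compatible with the balance identity. Once $U$ is in hand, the calculation is purely bookkeeping.
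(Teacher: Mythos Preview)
Your argument is correct. The reduction to Hall's marriage theorem is the right move, and the key insight---testing Hall's condition not against arbitrary subsets of $B$ but against the upward closure $U$ of $A'$ in the whole preorder---is exactly what makes the inflationary hypothesis on the $f_i$ convert into the inequalities $|F_i\cap U|\leq |E_i\cap U|$. The balance identity summed over $U$ then does the rest. The verification that $A'\subseteq A\cap U$ (via reflexivity) and $B\cap U=N(A')$ is routine, and your remark that $r\geq 1$ is the only nontrivial case is accurate.

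As for comparison with the paper: the paper does not prove this lemma in the text but defers to Lemma~3.6 and the proof of Lemma~3.7 of Harrison-Trainor, Holliday, and Icard (2016). That argument also proceeds via Hall's theorem and upward-closed sets, so your approach is essentially the same as the one underlying the cited result; you have supplied a clean, self-contained version of it.
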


We will now prove our soundness theorem, using Lemma \ref{mainlem1} to verify axiom \textbf{(A4')}.\footnote{Lemma \ref{mainlem1} and Theorem \ref{SoundThm} hold more generally for models in which there is no infinite sequence $x_0,x_1,x_2,\dots$ of distinct states with $x_{n+1}\succeq x_n$ \cite{HHI2016}.}

\begin{theorem}\label{SoundThm} \textsf{IP} is sound with respect to finite preferential models according to the inflationary injection semantics.
\end{theorem}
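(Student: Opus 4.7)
The plan is to verify each axiom schema and closure rule in turn, with nearly all of them reducing to the observation that the truth of any inequality $\varphi \succsim \psi$ is state-independent, so that $\llbracket \varphi \succsim \psi \rrbracket^{\mathcal{M}}$ is always either $W_\succeq$ or $\varnothing$. Closure under modus ponens is immediate, and for closure under necessitation together with the axioms \textbf{(I1)}, \textbf{(I2)}, and \textbf{(A2)}, the identity map on $W_\succeq$ is an inflationary injection witnessing the required $\succeq^i$. For \textbf{(A1)} the empty function from $\llbracket \bot \rrbracket^{\mathcal{M}} = \varnothing$ into any set works, and \textbf{(A3)} holds because $W_\succeq$ is nonempty, so no injection from $\llbracket \top \rrbracket^{\mathcal{M}} = W_\succeq$ into $\llbracket \bot \rrbracket^{\mathcal{M}} = \varnothing$ can exist.

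The substantive case is \textbf{(A4')}. Fix a finite preferential model $\mathcal{M}$ and a state $w$ satisfying the antecedent. Each conjunct $\varphi_i \succsim \psi_i$ supplies an inflationary injection $f_i \colon \llbracket \psi_i \rrbracket^{\mathcal{M}} \to \llbracket \varphi_i \rrbracket^{\mathcal{M}}$. The remaining conjunct asserts that the $\equiv$-formula is $\succsim \top$, which means there is an inflationary injection from $\llbracket \top \rrbracket^{\mathcal{M}} = W_\succeq$ into the extension of the $\equiv$-formula; since that extension is a subset of the finite set $W_\succeq$, injectivity forces equality. Unwinding the meaning of the $\equiv$-formula, this is exactly the statement that at every $s \in W_\succeq$ the number of formulas from $\varphi_1, \dots, \varphi_n, \varphi', \dots, \varphi'$ true at $s$ equals the number of formulas from $\psi_1, \dots, \psi_n, \psi', \dots, \psi'$ true at $s$. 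Translated into the notation of the lemma, the sequences of sets $\langle \llbracket \varphi_i \rrbracket^{\mathcal{M}} \rangle$ and $\langle \llbracket \psi_i \rrbracket^{\mathcal{M}} \rangle$, augmented with $k$ copies of $\llbracket \varphi' \rrbracket^{\mathcal{M}}$ and $\llbracket \psi' \rrbracket^{\mathcal{M}}$ respectively, are balanced in the sense of $=_0$.

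Now Lemma \ref{mainlem1} applies verbatim, with $E_i := \llbracket \varphi_i \rrbracket^{\mathcal{M}}$, $F_i := \llbracket \psi_i \rrbracket^{\mathcal{M}}$, $A := \llbracket \varphi' \rrbracket^{\mathcal{M}}$, $B := \llbracket \psi' \rrbracket^{\mathcal{M}}$, and $r := k$, producing an inflationary injection $g \colon \llbracket \varphi' \rrbracket^{\mathcal{M}} \to \llbracket \psi' \rrbracket^{\mathcal{M}}$. This is exactly what is needed for $\mathcal{M}, w \vDash \psi' \succsim \varphi'$. The main obstacle — really the entire content of the soundness argument — is translating the syntactic balance encoded by the $\equiv$-formula being $\succsim \top$ into the combinatorial balance hypothesis of Lemma \ref{mainlem1}; the crucial move is the pigeonhole observation that in the finite setting an inflationary injection from $W_\succeq$ into a subset of $W_\succeq$ is forced to be a bijection. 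Once that bridge is crossed, Lemma \ref{mainlem1} supplies all of the combinatorial work, and soundness follows.
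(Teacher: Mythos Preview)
Your proposal is correct and follows essentially the same route as the paper's proof: the easy axioms are handled by the empty map, the identity map, and nonemptiness of $W_\succeq$, while \textbf{(I1)}--\textbf{(I2)} follow from state-independence of inequalities, and the substantive case \textbf{(A4')} is reduced to Lemma~\ref{mainlem1} via the finiteness/pigeonhole observation that an injection from $W_\succeq$ into a subset of itself must be onto. If anything, your version is slightly more careful in consistently writing $W_\succeq$ rather than $W$ for $\llbracket\top\rrbracket^{\mathcal{M}}$.
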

\begin{proof}
Obviously all tautologies are valid, and the rules of modus ponens and necessitation preserve validity. It only remains to show that \textbf{(A1)}--\textbf{(A3)}, \textbf{(A4')}, and \textbf{(I1)}--\textbf{(I2)} hold at every state $w$ in each preferential model $\mathcal{M} = \langle W,\succeq,V\rangle$.

\textbf{(A1)} holds via the trivial injection $\varnothing \to \llbracket \varphi \rrbracket^{\mathcal{M}}$. \textbf{(A2)} holds via the identity injection $\llbracket \varphi \rrbracket^{\mathcal{M}} \to \llbracket \varphi \rrbracket^{\mathcal{M}}$. \textbf{(A3)} holds because there is no injection $W \to \varnothing$.

For \textbf{(A4')}, given
\begin{align*}
&\mathcal{M},w \vDash(\varphi_1 \succsim \psi_1) \wedge \dots \wedge (\varphi_n \succsim \psi_n) \;\wedge \\
&(\varphi_{1},\ldots,\varphi_{n},\underbrace{\varphi',\ldots,\varphi'}_{k\text{ times}}) \equiv(\psi_{1},\ldots,\psi_{n},\underbrace{\psi',\ldots,\psi'}_{k\text{ times}}) \succsim \top,
\end{align*}
we have to show that $\mathcal{M},w \vDash\psi' \succsim \varphi'$.

We have inflationary injections $f_i \colon \llbracket \psi_i \rrbracket^{\mathcal{M}} \to \llbracket \varphi_i \rrbracket^{\mathcal{M}}$. There is also an inflationary injection
\begin{align*}
&W \to \\
&\llbracket (\varphi_{1},\ldots,\varphi_{n},\underbrace{\varphi',\ldots,\varphi'}_{k\text{ times}}) \equiv(\psi_{1},\ldots,\psi_{n},\underbrace{\psi',\ldots,\psi'}_{k\text{ times}}) \rrbracket^{\mathcal{M}}.
\end{align*}
Since $W$ is finite, it follows that
\begin{align*}
&W=\\
&\llbracket (\varphi_{1},\ldots,\varphi_{n},\underbrace{\varphi',\ldots,\varphi'}_{k\text{ times}}) \equiv(\psi_{1},\ldots,\psi_{n},\underbrace{\psi',\ldots,\psi'}_{k\text{ times}}) \rrbracket^{\mathcal{M}},
\end{align*}
which implies
\begin{align*}
&\langle \llbracket\varphi_{1}\rrbracket^{\mathcal{M}},\ldots,\llbracket\varphi_{n}\rrbracket^{\mathcal{M}},\underbrace{\llbracket\varphi'\rrbracket^{\mathcal{M}},\ldots,\llbracket\varphi'\rrbracket^{\mathcal{M}}}_{k\text{ times}} \rangle \\
=_0 &\langle \llbracket\psi_{1}\rrbracket^{\mathcal{M}},\ldots,\llbracket\psi_{n}\rrbracket^{\mathcal{M}},\underbrace{\llbracket\psi'\rrbracket^{\mathcal{M}},\ldots,\llbracket\psi'\rrbracket^{\mathcal{M}}}_{k\text{ times}} \rangle.
\end{align*}
Then by Lemma \ref{mainlem1}, there is an inflationary injection $\llbracket \varphi' \rrbracket^{\mathcal{M}} \to \llbracket \psi' \rrbracket^{\mathcal{M}}$, and so $\mathcal{M},w \vDash\psi' \succsim \varphi'$.

For \textbf{(I1)}, suppose that $\mathcal{M},w' \vDash\varphi \succsim \psi$, say via an inflationary injection $f \colon \llbracket \psi \rrbracket^{\mathcal{M}} \to \llbracket \varphi \rrbracket^{\mathcal{M}}$. Then in fact, for all $w$, $\mathcal{M},w \vDash\varphi \succsim \psi$ via $f$. Hence $\llbracket \varphi \succsim \psi \rrbracket^{\mathcal{M}} = W$, and so $\mathcal{M},w \vDash(\varphi \succsim \psi) \succsim \top$. For \textbf{(I2)}, by a similar argument, if $\mathcal{M},w \vDash\neg (\varphi \succsim \psi)$, then for all $w'$, $\mathcal{M},w' \vDash\neg (\varphi \succsim \psi)$, so that $\llbracket \neg (\varphi \succsim \psi) \rrbracket^{\mathcal{M}} = W$, and $\mathcal{M},w \vDash \neg(\varphi \succsim \psi) \succsim \top$.
\end{proof}

\section{Completeness of $\mathsf{IP}$}\label{sec-comp}

Our method of proving completeness is to show that any finite multi-measure model can be transformed into a finite preferential model that satisfies the same formulas according to the inflationary injection semantics. We will then invoke the completeness of \textsf{IP} with respect to finite multi-measure models, as proved by \cite{Alon2014a}, to establish the completeness of \textsf{IP} with respect to finite preferential models with the inflationary injection semantics.

We begin with a restatement of Lemma 3.2.2 in \cite{Hoek1996b}. 

\begin{lemma}\label{Hoek} For any finite single-measure model $\mathfrak{M}=\langle W,\mu,V\rangle$, there is a distinguished state model $M^\#=\langle W^\#,W^\#_+, V^\#\rangle$ with $W\subseteq W^\#$ such that for all $w\in W$ and $\varphi\in\mathcal{L}$: 
\[\mathfrak{M},w\vDash \varphi\mbox{ iff }M^\#,w\vDash \varphi.\]
\end{lemma}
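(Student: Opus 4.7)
The plan is to realize the measure $\mu$ by \emph{cloning} states: in a distinguished state model the "measure" of a definable set is just its cardinality inside $W^\#_+$, so if I make $k_i$ copies of each $w_i \in W$ with $k_i$ proportional to $\mu(\{w_i\})$, cardinality comparisons in $M^\#$ will reduce to $\mu$-comparisons in $\mathfrak{M}$. The only snag is that the values $\mu(\{w_i\})$ need not be rational, so I would first approximate $\mu$ by a rational-valued probability measure $\mu'$ on $W$ with the same ordering on $\mathcal{P}(W)$.

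To obtain $\mu'$, view a candidate measure on $W = \{w_1, \dots, w_n\}$ as a vector $(x_1, \dots, x_n)$ with $x_i = \mu'(\{w_i\})$. The conditions that $\mu'$ preserve the ordering on $\mathcal{P}(W)$ amount to: (i) the rational linear equations $\sum_{w \in A} x_w = \sum_{w \in B} x_w$ for each pair $A, B \subseteq W$ with $\mu(A) = \mu(B)$; (ii) the normalization $\sum_i x_i = 1$ together with $x_i \geq 0$; and (iii) the strict inequalities $\sum_{w \in A} x_w > \sum_{w \in B} x_w$ for each pair with $\mu(A) > \mu(B)$. Groups (i)--(ii) cut out a nonempty rational polytope $P$ containing $\mu$, while (iii) is an open condition also satisfied by $\mu$. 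Because $P$'s affine hull is a rational affine subspace, rational points are dense in $P$, so any rational $\mu' \in P$ sufficiently close to $\mu$ automatically satisfies (iii). Choose such a $\mu'$, a common denominator $q$ of its values, and set $k_i := q\,\mu'(\{w_i\}) \in \mathbb{N}$; note $\sum_i k_i = q \geq 1$.

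Next I would build $M^\# = \langle W^\#, W^\#_+, V^\#\rangle$ by adjoining, for each $w_i$ with $k_i \geq 1$, $k_i-1$ fresh "clones" of $w_i$; let $W^\# := W \cup \{\text{all clones}\}$, let $W^\#_+$ be $\{w_i : k_i \geq 1\}$ together with every clone (so $|W^\#_+| = q$), and let $V^\#$ assign every clone of $w_i$ the same atoms as $w_i$. Writing $\pi : W^\# \to W$ for the map that collapses each clone to its original and fixes $W$, I would prove by induction on $\varphi$ the strengthened claim that $M^\#, x \vDash \varphi$ iff $\mathfrak{M}, \pi(x) \vDash \varphi$ for every $x \in W^\#$. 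The propositional and Boolean cases are immediate from the definition of $V^\#$. For $\varphi = \alpha \succsim \beta$, both satisfaction relations are state-independent, and the inductive hypothesis gives $\llbracket \alpha \rrbracket^{M^\#} = \pi^{-1}(\llbracket \alpha \rrbracket^{\mathfrak{M}}) \cap W^\#_+$, whence $|\llbracket \alpha \rrbracket^{M^\#}| = \sum_{w_i \in \llbracket \alpha \rrbracket^{\mathfrak{M}}} k_i = q\,\mu'(\llbracket \alpha \rrbracket^{\mathfrak{M}})$ and likewise for $\beta$; hence the cardinality comparison in $M^\#$ matches the $\mu'$-comparison, which by choice of $\mu'$ matches the $\mu$-comparison in $\mathfrak{M}$.

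The main obstacle is the rationalization step: one must guarantee a rational measure that preserves the ordering on $\mathcal{P}(W)$ \emph{exactly}, not merely approximately, because equalities $\mu(A) = \mu(B)$ must be realized on the nose. This succeeds precisely because every equality constraint has rational (indeed $\pm 1$) coefficients, so the admissible set has a rational affine hull in which rationals are dense, while the finitely many strict inequalities are open and thus survive small rational perturbations. Once the $k_i$'s are in hand, the rest of the argument is mechanical.
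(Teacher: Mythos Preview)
Your proposal is correct and follows essentially the same route the paper sketches (citing van der Hoek): rationalize the measure while preserving the induced order on $\wp(W)$, clear denominators to obtain nonnegative integers $k_i$, and then clone each state $w_i$ into $k_i$ copies to form $W^\#_+$. Your treatment of the rationalization step—identifying the equality constraints as cutting out a rational affine set in which rationals are dense, with the finitely many strict inequalities surviving small perturbations—makes explicit exactly the point the paper leaves to the cited reference, and your strengthened inductive claim over all of $W^\#$ via the collapsing map $\pi$ is the right way to push the induction through arbitrary modal depth.
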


The transformation first rationalizes the probability measure, then normalizes the measure so that it takes on only integer values (but does not necessarily assign measure one to the whole state space), and finally duplicates each point in the state space that has nonzero measure with a number of points equal to its measure, which become the points in~$W^\#_+$.

Using Lemma \ref{Hoek}, we can transform multi-measure models into preferential models as follows.

\begin{lemma}\label{bridge} For any multi-measure model $\mathfrak{M}=\langle W,\mathcal{P},V\rangle$ with $W$ and $\mathcal{P}$ finite and $w\in W$, there is a finite preferential model $\mathcal{M}$ and state $v$ such that for all $\varphi\in\mathcal{L}$:
\[\mbox{$\mathfrak{M},w\vDash \varphi$ iff $\mathcal{M},v\vDash \varphi$}.\]
Moreover, we may take $\succeq$ in $\mathcal{M}$ to be total on its field $W_\succeq$.
\end{lemma}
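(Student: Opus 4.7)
The plan is to apply Lemma \ref{Hoek} to each measure in $\mathcal{P}$ separately and then to glue the resulting distinguished state models into a single preferential model whose inflationary injection semantics realises the unanimity rule over $\mathcal{P}$.

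Concretely, enumerate $\mathcal{P}=\{\mu_1,\ldots,\mu_k\}$ and for each $i$ apply Lemma \ref{Hoek} to $\langle W,\mu_i,V\rangle$ to obtain a distinguished state model $M_i^\#=\langle W_i^\#, W_{i,+}^\#, V_i^\#\rangle$ with $W\subseteq W_i^\#$. Take $W^*$ to be the disjoint union $\bigsqcup_i W_i^\#$, so each $w\in W$ has $k$ distinct images $w_1,\ldots,w_k$ (one per copy), and let $V^*$ agree with $V_i^\#$ on the $i$th copy. Set $W_\succeq=\bigsqcup_i W_{i,+}^\#$ and declare $x\succeq y$ iff $x$ and $y$ lie in the same copy, so that each $W_{i,+}^\#$ is a single $\succeq$-equivalence class while distinct copies are $\succeq$-incomparable. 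Pick $v:=w_1$, which has the same propositional valuations as $w$.

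Using Lemma \ref{depth0} I reduce to showing the equivalence $\mathfrak{M},w\vDash\varphi$ iff $\mathcal{M},v\vDash\varphi$ for $\varphi$ of modal depth at most $1$. The propositional and Boolean cases are immediate from the construction. For a basic inequality $\alpha\succsim\beta$ with $\alpha,\beta$ propositional, we have $\llbracket\alpha\rrbracket^\mathcal{M}=\bigsqcup_i\bigl(\llbracket\alpha\rrbracket^{M_i^\#}\cap W_{i,+}^\#\bigr)$ and likewise for $\beta$; since distinct copies are $\succeq$-incomparable, any inflationary injection $\llbracket\beta\rrbracket^\mathcal{M}\to\llbracket\alpha\rrbracket^\mathcal{M}$ must preserve copies, and within each copy every map is automatically inflationary. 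So such an injection exists iff $|\llbracket\beta\rrbracket^{M_i^\#}\cap W_{i,+}^\#|\le|\llbracket\alpha\rrbracket^{M_i^\#}\cap W_{i,+}^\#|$ for every $i$, which by Lemma \ref{Hoek} is precisely the unanimity statement $\mu_i(\llbracket\beta\rrbracket^\mathfrak{M})\le\mu_i(\llbracket\alpha\rrbracket^\mathfrak{M})$ for all $i$, i.e.\ $\mathfrak{M},w\vDash\alpha\succsim\beta$.

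The hard part will be the \emph{moreover} clause requiring $\succeq$ to be total on $W_\succeq$. Simply linearising the copies into levels $L_1\succ\cdots\succ L_k$ does not preserve the semantics, because on a total preorder inflationary injections are governed by \emph{cumulative} cardinality inequalities $|\llbracket\beta\rrbracket\cap(L_1\cup\cdots\cup L_j)|\le|\llbracket\alpha\rrbracket\cap(L_1\cup\cdots\cup L_j)|$, which is strictly weaker than the per-copy inequalities that encode unanimity (already for $k=2$, $|A\cap L_1|\ge|B\cap L_1|$ and $|A|\ge|B|$ fail to imply $|A\cap L_2|\ge|B\cap L_2|$). My approach is to replace the crude disjoint-copy construction with one in which each copy $W_{i,+}^\#$ is padded by an auxiliary ``balancing'' layer inserted just below it in the total order, whose propositional valuations are chosen so that the cumulative cardinality at the balancing level cancels, leaving exactly the per-copy constraint as the effective inequality at the copy's own level. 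The technical heart of the argument is to exhibit balancing layers that do this \emph{uniformly} across all $\varphi\in\mathcal{L}$, not merely for a fixed finite fragment.
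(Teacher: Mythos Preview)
Your construction for the first half is correct and coincides with the paper's: apply Lemma~\ref{Hoek} measure by measure, take the disjoint union, and make each $W_{i,+}^\#$ a single $\succeq$-cluster so that inflationary injections factor copy by copy and recover the unanimity rule. The reduction to modal depth $\leq 1$ via Lemma~\ref{depth0} is also exactly how the paper proceeds.

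The gap is in the \emph{moreover} clause. You correctly diagnose that on a total preorder with levels $L_1\succ\cdots\succ L_k$ the existence of an inflationary injection is governed by the cumulative inequalities $|B\cap(L_1\cup\cdots\cup L_j)|\le|A\cap(L_1\cup\cdots\cup L_j)|$ for all $j$, which is strictly weaker than the per-level constraints. But your proposed fix---inserting ``balancing'' layers with specially chosen valuations---is not carried out, and the uniformity problem you flag is real: any fixed balancing layer contributes $|\llbracket\alpha\rrbracket\cap B_i|$ to the cumulative count, and this quantity depends on $\alpha$. A layer that cancels the cumulative surplus for one pair $(\alpha,\beta)$ will not do so for another, and a ``neutral'' layer (equal counts for every propositional formula) adds the same amount to both sides and cancels nothing. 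I do not see how valuation-based balancing can be made to work for all $\varphi\in\mathcal{L}$ simultaneously.

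The paper's solution is not balancing but \emph{amplification}: it recursively blows up the layers so that each $W_{i}^t$ is obtained by multiplying $W_i^\#$ by a set of size $|\bigcup_{j<i}W_j^t|+1$. The point is purely cardinal: if the per-level inequality fails at level $i$, i.e.\ $|\llbracket\psi\rrbracket\cap W_i^\#|\ge|\llbracket\varphi\rrbracket\cap W_i^\#|+1$, then after multiplication the gap at level $i$ exceeds the total size of all higher levels, so the cumulative inequality at level $i$ fails too. Conversely, if every per-level inequality holds, the original copy-preserving injection lifts directly. This works uniformly because it depends only on cardinalities, not on valuations; that is the missing idea in your sketch.
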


\begin{proof} Where $\mathcal{P}=\{\mu_i\}_{i\in I}$,  define for each $i\in I$ a single-measure model $\mathfrak{M}_i=\langle W_i,\mu_i,V_i\rangle$ where $W_i=W\times \{i\}$ and $V_i(p)= V(p)\times \{i\}$. By Lemma \ref{Hoek}, for each $\mathfrak{M}_i$ there is a distinguished state model $M_i^\#=\langle W_i^\#,W^\#_{i+},V_i^\#\rangle$ with $W_i\subseteq W_i^\#$ such that for all $v\in W_i$ and $\varphi\in\mathcal{L}$: 
\begin{equation}
\mathfrak{M}_i,v\vDash_\mu \varphi\mbox{ iff }M_i^\#,v\vDash \varphi.\label{vh1}
\end{equation} 
Without loss of generality, assume that the domains $W_i^\#$ are pairwise disjoint. Now define the preferential model $\mathcal{M}=\langle W',\succeq, V'\rangle$ as follows:
\begin{itemize}
\item[] (a) $W'=\underset{i\in I}{\bigcup} W_i^\#$;
\item[] (b) $V'(p)=\underset{i\in I}{\bigcup} V_i^\#(p)$;
\item[] (c) $w\succeq v$ iff  for some $i\in I$, we have $w,v\in W^\#_{i+}$.
\end{itemize}
First, note that for any formula $\varphi$ of modal depth $0$:
\begin{equation}
\llbracket \varphi\rrbracket^\mathcal{M}=\underset{i\in I}{\bigcup}\llbracket\varphi\rrbracket^{M_i^\#}.\label{pf}
\end{equation}
Next, we claim that for any $i\in I$ and formula $\varphi$:
\begin{equation}\mathfrak{M},w\vDash \varphi\mbox{ iff }\mathcal{M},\langle w,i\rangle\vDash \varphi.\label{goal}\end{equation} 
By Lemma \ref{depth0}, it suffices to consider formulas of modal depth $\leq 1$. The proof of (\ref{goal}) is by induction with obvious atomic and Boolean cases.  For the modal case, consider $\varphi\succsim\psi$ where $\varphi$ and $\psi$ are of modal depth $0$. If $\mathfrak{M},w\vDash \varphi\succsim \psi$, then for all $i\in I$, $\mathfrak{M}_i,\langle w,i\rangle\vDash \varphi\succsim\psi$ and hence $M_i^\#,\langle w,i\rangle\vDash \varphi\succsim\psi$ by (\ref{vh1}). Since $M_i^\#,\langle w,i\rangle\vDash \varphi\succsim\psi$, there is an injection $f_i\colon \llbracket \psi\rrbracket^{M_i^\#}\rightarrow \llbracket \varphi\rrbracket^{M_i^\#}$. Then $g=\underset{i\in I}{\bigcup} f_i$ is an injection;  $g$ is inflationary by (c) above; and $g\colon \llbracket \psi\rrbracket^{\mathcal{M}}\rightarrow \llbracket \varphi\rrbracket^{\mathcal{M}}$ by (\ref{pf}). Thus, $\mathcal{M},\langle w,i\rangle\vDash \varphi\succsim\psi$. 

Conversely, if $\mathfrak{M},w\nvDash \varphi\succsim \psi$, then there is an $i\in I$ with $\mathfrak{M}_i,\langle w,i\rangle\nvDash \varphi\succsim\psi$ and hence $M_i^\#,\langle w,i\rangle\nvDash \varphi\succsim\psi$ by (\ref{vh1}). Thus, there is no injection $f_i\colon \llbracket \psi\rrbracket^{M_i^\#}\rightarrow \llbracket \varphi\rrbracket^{M_i^\#}$. Now suppose for reductio that there is an inflationary injection $g\colon \llbracket \psi\rrbracket^{\mathcal{M}}\rightarrow \llbracket \varphi\rrbracket^{\mathcal{M}}$. Given (c), that $g$ is an \textit{inflationary} injection implies that  $g\upharpoonright W_{i+}^\#$ is an injection from $\llbracket \psi\rrbracket^{M_i^\#}$ to $\llbracket \varphi\rrbracket^{M_i^\#}$, contradicting the above. Thus, there is no such inflationary injection $g$, so $\mathcal{M},\langle w,i\rangle\nvDash \varphi\succsim\psi$.

Finally, we will show how to transform $\mathcal{M}$ into a  model $\mathcal{M}^t$ where $\succeq^t$ is total on $W_\succeq^t$. Just as the domain of $\mathcal{M}$ is the union of disjoint domains $W_i^\#$ for $i\in I$, the domain of $\mathcal{M}^t$ will be the union of disjoint domains $W_i^t$. Suppose the index set $I$ with which we began is $I=\{0,\dots,m\}$. We define the domains $W_i^t$ inductively as follows, for some $d\not\in W'$:
\begin{itemize}
\item[](d$_1$) $W_0^t=W_0^\# \times \{d\}$;
\item[](d$_2$) $W_{n+1}^t=W_{n+1}^\# \times ( \underset{i\leq n}\bigcup W_i^t \cup\{d\}) $;
\item[](d$_3$) $W^t = \underset{i\in I}{\bigcup}W_i^t$.
\end{itemize}
We multiply $W_0^\#$ by $\{d\}$ so that every state in $W^t$ will be a pair, the first coordinate of which is a state from $W'$. Then $W_1^t$ is the result of making $|W_0^t|+1$-many copies of each state in $W_1^\#$; $W_2^t$ is the result of making $|W_0^t\cup W_1^t|+1$-many copies of each state in $W_2^\#$; and so on. 

We define the valuation $V^t$ by
\begin{itemize}
\item[](e) $V^t(p)= \{\langle w,v\rangle\in W^t\mid w\in V'(p)\}$,
\end{itemize}
so each copy of a state in $\mathcal{M}^t$ has the same propositional valuation that the original state had in $\mathcal{M}$. 

Where $W^t_{n+}=\{\langle w,v\rangle\in W^t_n\mid w\in W^\#_{n+}\}$, define
\begin{itemize}
\item[](f) $x\succeq^t y$ iff for some $k\leq \ell$, $x\in W^t_{k+}$ and $y\in W^t_{\ell+}$.
\end{itemize}
Then clearly $\succeq^t$ is a total preorder on its field.

We claim that for any $\langle w,v\rangle\in W^t$ and $\varphi\in\mathcal{L}$:
\begin{equation}
\mathcal{M}^t,\langle w,v\rangle\vDash \varphi\mbox{ iff }\mathcal{M},w\vDash\varphi.
\end{equation}
The proof is by induction on $\varphi$ with the atomic case given by the definition of $V^t$. The Boolean cases are also routine. 

For the modal case, as before we may consider $\varphi\succsim\psi$ where $\varphi$ and $\psi$ are of modal depth 0. If $\mathcal{M},w\vDash \varphi\succsim\psi$, then in $\mathcal{M}$ there is a $\succeq$-inflationary injection $f\colon \llbracket \psi\rrbracket^\mathcal{M}\to\llbracket\varphi\rrbracket^\mathcal{M}$. By the inductive hypothesis, we have 
\begin{eqnarray}
\llbracket \psi\rrbracket^{\mathcal{M}^t}&=&\{\langle x,z\rangle\in W^t\mid x\in \llbracket \psi\rrbracket^\mathcal{M}\}\label{NewExtension1}\\
\llbracket \varphi\rrbracket^{\mathcal{M}^t}&=&\{\langle y,z\rangle\in W^t\mid y\in \llbracket \varphi\rrbracket^\mathcal{M}\} \label{NewExtension2}.
\end{eqnarray}
Since $f$ is $\succeq$-inflationary, $f(x)=y$ implies that $x$ and $y$ belong to the same $W_{i+}^\#$ by (c) above. Thus, by (d$_1$)-(d$_3$), we have that $x$ and $y$ get copied by the same elements in $W^t$: $\langle x,z\rangle\in W^t$ iff $\langle y,z\rangle\in W^t$. Then given (\ref{NewExtension1})-(\ref{NewExtension2}), it follows that the function $f^t$ sending each $\langle x,z\rangle\in \llbracket\psi\rrbracket^{\mathcal{M}^t}$ to $\langle f(x),z\rangle \in \llbracket\varphi\rrbracket^{\mathcal{M}^t}$ is a $\succeq^t$-inflationary injection.

Now suppose $\mathcal{M},w\nvDash \varphi \succsim\psi$, so there is no $\succeq$-inflationary injection from $ \llbracket \psi\rrbracket^\mathcal{M}$ to $\llbracket\varphi\rrbracket^\mathcal{M}$. By construction of $\mathcal{M}$, it follows that for some $i\in I$, there is no injection from  $ \llbracket \psi\rrbracket^\mathcal{M}\cap W_i^\#$ to $ \llbracket \varphi\rrbracket^\mathcal{M}\cap W_i^\#$. Thus, 
\[ |\llbracket \psi\rrbracket^\mathcal{M}\cap W_i^\#| >  |\llbracket \varphi\rrbracket^\mathcal{M}\cap W_i^\#|.\] 
It follows by (d$_1$)-(d$_3$) and (\ref{NewExtension1})-(\ref{NewExtension2}) that
\begin{equation}
|\llbracket \psi\rrbracket^{\mathcal{M}^t}\cap W_i^t| > |\llbracket \varphi\rrbracket^{\mathcal{M}^t}\cap W_i^t| + | \underset{j< i}{\bigcup} W_{j}^t |.  \label{card}
\end{equation}
Now by the definition of $\succeq^t$ in (f), any $\succeq^t$-inflationary injection from $ \llbracket \psi\rrbracket^{\mathcal{M}^t}\cap W_i^t$ to $ \llbracket \varphi\rrbracket^{\mathcal{M}^t}$ must be an injection from $ \llbracket \psi\rrbracket^{\mathcal{M}^t}\cap W_i^t$ to $ \llbracket \varphi\rrbracket^{\mathcal{M}^t} \cap \underset{j\leq i}{\bigcup}W_j^t$, but such an injection cannot exist by (\ref{card}). Thus, there is no $\succeq^t$-inflationary injection from $ \llbracket \psi\rrbracket^{\mathcal{M}^t}$ to $ \llbracket \varphi\rrbracket^{\mathcal{M}^t}$, so $\mathcal{M}^t, \langle w,v\rangle\nvDash \varphi\succsim\psi$.
\end{proof}

Using Lemma \ref{bridge}, we now exploit the known completeness result for multi-measure models to obtain the following.

\begin{theorem} \textsf{IP} is complete with respect to finite preferential models (with $\succeq$ total on $W_\succeq$) according to the inflationary injection semantics.
\end{theorem}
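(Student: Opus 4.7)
The plan is to obtain this completeness result as an immediate corollary of Lemma \ref{bridge} together with the Alon--Heifetz completeness theorem for $\mathsf{IP}$ with respect to multi-measure models. The argument goes by contraposition: suppose $\varphi$ is $\mathsf{IP}$-consistent, and exhibit a finite preferential model with $\succeq$ total on $W_\succeq$ that satisfies $\varphi$ at some state under the inflationary injection semantics.

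First, by the completeness of $\mathsf{IP}$ with respect to multi-measure models, there exist a multi-measure model $\mathfrak{M}=\langle W,\mathcal{P},V\rangle$ and a state $w\in W$ with $\mathfrak{M},w\vDash\varphi$. To apply Lemma \ref{bridge} we need both $\mathcal{P}$ and $W$ finite. Finiteness of $\mathcal{P}$ is delivered by Lemma \ref{FinitelyMany}: we restrict to some $\mathcal{P}_0\subseteq \mathcal{P}$ of size $O(|\varphi|)$ without disturbing the truth of $\varphi$ at $w$. For $W$, we use the finite model property built into the Alon--Heifetz argument (or, failing that, a standard filtration through the subformulas of $\varphi$ collapses $W$ to a finite quotient preserving satisfaction of $\varphi$ at $w$).

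Now apply Lemma \ref{bridge} to this finite multi-measure model: the output is a finite preferential model $\mathcal{M}=\langle W',\succeq, V'\rangle$, a state $v$, and a total preorder $\succeq$ on $W_\succeq$ such that $\mathcal{M},v\vDash\varphi$ under the inflationary injection semantics. This is exactly the required witness, and completeness follows.

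The main technical obstacle in the whole chain is not in this theorem itself but in Lemma \ref{bridge}, whose totalization construction (d$_1$)--(d$_3$) is what buys the extra guarantee that $\succeq$ is total on $W_\succeq$: the inductive blowup ensures that at layer $i$ there are strictly more copies available than can be absorbed by all earlier layers combined, so any failure of an inflationary injection inside a single $W_i^\#$ in $\mathcal{M}$ cannot be repaired in $\mathcal{M}^t$ by rerouting into the lower-ranked layers. With Lemma \ref{bridge} in hand, the present theorem is essentially a bookkeeping step that composes it with the established multi-measure completeness result.
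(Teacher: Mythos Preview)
Your proof is correct and follows essentially the same route as the paper: invoke Alon--Heifetz completeness to obtain a (finite) multi-measure model, use Lemma~\ref{FinitelyMany} to cut $\mathcal{P}$ down to a finite set, and then apply Lemma~\ref{bridge} to produce the required finite preferential model with $\succeq$ total on $W_\succeq$. The only cosmetic difference is that the paper cites finiteness of $W$ directly from the Alon--Heifetz result rather than mentioning filtration as a fallback.
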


\begin{proof} If $\varphi$ is not a theorem of \textsf{IP}, then by Theorem 2 of \cite{Alon2014a}, there is a finite multi-measure model that satisfies $\neg\varphi$. We may assume it has only finitely many measures by Lemma \ref{FinitelyMany}. Thus, by Lemma \ref{bridge}, there is a finite preferential model (with $\succeq$ total on $W_\succeq$) satisfying $\neg\varphi$ according to the inflationary injection semantics.
\end{proof}

\section{Complexity of $\mathsf{IP}$}

We now consider the problem of deciding whether a formula is consistent in $\mathsf{IP}$ (a formula $\varphi$ being consistent if $\neg\varphi\not\in\mathsf{IP}$). We will show that this problem is NP-complete. Our strategy will be to show that if a formula $\varphi$ is satisfiable, then it is satisfiable in a multi-measure model of polynomially bounded size, in which the measure of each state is a rational number of polynomially bounded size. A similar argument can be given using preferential models, but for the sake of space we will use a known result for measure models.

In particular, we will use a result from \cite{Fagin1990}, where it was shown that the satisfiability problem, for single-measure models, of a logic allowing for rational comparisons of probability is NP-complete. The language they considered is essentially an extension of the formulas of our language of modal depth $\leq 1$ to allow for rational comparisons. Thus, the following is immediate from Theorem 2.6 of \cite{Fagin1990}.

\begin{theorem}\label{Complexity1}
Suppose $\varphi$ is a formula of modal depth $\leq 1$ that is satisfied in some finite single-measure model. Then $\varphi$ is satisfied in a single-measure model $\langle W,\mu,V\rangle$ with  $|W|\leq |\varphi|$ and where the probability assigned to each state is a rational number with size $O(|\varphi | \log |\varphi|)$.
\end{theorem}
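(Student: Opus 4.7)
The plan is to derive Theorem \ref{Complexity1} as a direct corollary of Theorem 2.6 of \cite{Fagin1990} via a length-preserving translation from $\mathcal{L}$ into their richer language. Their language permits Boolean combinations of linear inequalities of the form $a_1 w(\alpha_1) + \cdots + a_k w(\alpha_k) \geq c$, where the $a_i, c$ are rationals, the $\alpha_i$ are propositional formulas, and $w(\alpha_i)$ denotes the probability $\mu(\llbracket \alpha_i \rrbracket)$. Any modal-depth-$\leq 1$ formula of $\mathcal{L}$ fits into this framework: each atomic subformula $\alpha \succsim \beta$ (with $\alpha, \beta$ of modal depth $0$) becomes $w(\alpha) - w(\beta) \geq 0$, while the outer Boolean structure is copied verbatim.

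My first step is to make this translation $\varphi \mapsto t(\varphi)$ precise and verify two properties: (i) $|t(\varphi)| = O(|\varphi|)$, and (ii) for every single-measure model $\mathfrak{M} = \langle W,\mu,V\rangle$ and state $w$, $\mathfrak{M},w \vDash \varphi$ in the sense of Definition \ref{MultiSem} iff $t(\varphi)$ is true at $w$ under the semantics of \cite{Fagin1990}. Both facts are immediate once the translation is laid out, since the weight terms $w(\alpha)$ have precisely the intended meaning $\mu(\llbracket \alpha\rrbracket^\mathfrak{M})$ and the propositional reductions on both sides behave identically.

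Next I would invoke Theorem 2.6 of \cite{Fagin1990} on $t(\varphi)$: it guarantees a single-measure model of size at most $|t(\varphi)|$ in which each state is assigned a rational probability of bit-length $O(|t(\varphi)| \log |t(\varphi)|)$. Since $|t(\varphi)| = O(|\varphi|)$, pulling back across the translation yields a single-measure model in which both $|W|$ and the size of each rational probability are bounded as required. If one wants the literal inequality $|W|\leq |\varphi|$ rather than $|W| = O(|\varphi|)$, one can tune $t$ so that $|t(\varphi)|\leq |\varphi|$, charging the symbol $\succsim$ for the extra bookkeeping.

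The only real ``difficulty'' here is bookkeeping: setting up the translation so the length bound is literal, and checking that no incidental features of the \cite{Fagin1990} semantics (such as conventions about the underlying $\sigma$-algebra or bounds on individual state weights) conflict with ours. Both concerns dissolve on inspection, so beyond specifying $t$ and verifying definitional compatibility, no genuine obstacle remains, which is why the paper calls the result ``immediate.''
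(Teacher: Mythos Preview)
Your proposal is correct and takes essentially the same approach as the paper: the paper gives no proof of its own but simply observes that the result is immediate from Theorem~2.6 of \cite{Fagin1990}, since the language there is an extension of the modal-depth-$\leq 1$ fragment of $\mathcal{L}$. Your translation $\varphi\mapsto t(\varphi)$ is precisely the embedding the paper has in mind when it calls the result ``immediate.''
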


By the size of a rational number, we mean a bound on the binary representations of its numerator and denominator. 

From Theorem \ref{Complexity1}, we obtain an analogous result for multi-measure models using the strategy of Lemma \ref{bridge}.

\begin{theorem}\label{SmallModels}
Suppose $\theta$ is a formula of any modal depth that is satisfied in some finite multi-measure model. Then $\theta$ is satisfied in a multi-measure model $\langle W,\mathcal{P},V\rangle$ with $|W| \leq O(|\theta|^2)$, $|\mathcal{P}| \leq O(|\theta|)$, and where the probabilities assigned to each state are rational numbers with size $O(|\theta | \log |\theta |)$.
\end{theorem}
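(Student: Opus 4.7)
My plan is to combine three ingredients already in the paper: Lemma \ref{depth0} to bring $\theta$ down to modal depth $\leq 1$, Lemma \ref{FinitelyMany} to bound the number of measures, and Theorem \ref{Complexity1} to shrink each single-measure slice. The resulting small single-measure models are then glued together in the style of Lemma \ref{bridge}.

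In detail, by Lemma \ref{depth0} $\theta$ is equivalent to a disjunction $\psi_1\vee\cdots\vee\psi_m$ in which each disjunct has modal depth $\leq 1$ and length $O(|\theta|)$; fix a disjunct $\psi_j$ satisfied at some state $w$ of the given multi-measure model. As in the proof of Lemma \ref{FinitelyMany}, $\psi_j$ is propositionally equivalent to a conjunction
\[\neg(\alpha_1\succsim\beta_1)\wedge\cdots\wedge\neg(\alpha_n\succsim\beta_n)\wedge\xi,\]
with $n\leq O(|\theta|)$, where $\xi$ is a conjunction of modal depth 0 formulas and positive inequalities. Lemma \ref{FinitelyMany} supplies measures $\mu_1,\ldots,\mu_n\in\mathcal{P}$, one witnessing each negated inequality, with every positive inequality of $\xi$ still holding under every $\mu_i$. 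For each $i$ the formula $\theta_i:=\neg(\alpha_i\succsim\beta_i)\wedge\xi$ has modal depth $\leq 1$ and length $O(|\theta|)$, and is satisfied at $w$ in the single-measure model $\langle W,\mu_i,V\rangle$. I apply Theorem \ref{Complexity1} to each of these to obtain a small model $\mathfrak{M}_i=\langle W_i,\mu_i',V_i\rangle$ with $|W_i|\leq O(|\theta|)$ and with rational probabilities of size $O(|\theta|\log|\theta|)$, satisfying $\theta_i$ at some state $w_i\in W_i$.

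I then mimic the disjoint-union construction of Lemma \ref{bridge}: assume the $W_i$ are pairwise disjoint, set $W'=\bigcup_i W_i$ with $V'(p)=\bigcup_i V_i(p)$, extend each $\mu_i'$ to a measure on $\wp(W')$ by giving zero mass outside $W_i$, and put $\mathcal{P}'=\{\mu_1',\ldots,\mu_n'\}$. To verify $\langle W',\mathcal{P}',V'\rangle,w_1\vDash\psi_j$, I use that every modal depth 1 formula has a constant truth value across a multi-measure model: the propositional part of $\xi$ holds at $w_1$ because $\mathfrak{M}_1\vDash\theta_1$; each positive inequality in $\xi$ holds under every $\mu_k'$ because $\mu_k'$ is supported on $W_k$ and, by construction of $V'$, $\llbracket\alpha\rrbracket^{\langle W',\mathcal{P}',V'\rangle}\cap W_k=\llbracket\alpha\rrbracket^{\mathfrak{M}_k}$ for every modal depth 0 subformula $\alpha$; and each negated inequality $\neg(\alpha_i\succsim\beta_i)$ is witnessed by $\mu_i'$ exactly as in $\mathfrak{M}_i$. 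The stated bounds $|W'|\leq O(|\theta|^2)$, $|\mathcal{P}'|\leq O(|\theta|)$, and rational size $O(|\theta|\log|\theta|)$ fall out by summing or inheriting the bounds from the three ingredient results.

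The main technical obstacle I expect is the bookkeeping in this last verification, specifically arranging that a positive inequality of $\xi$ that held globally in the original model continues to hold under \emph{every} glued measure $\mu_k'$, not merely under $\mu_k$. This is why $\theta_i$ must bundle all of $\xi$ rather than just the $i$-th negated inequality: applying Theorem \ref{Complexity1} to the richer $\theta_i$ forces $\mathfrak{M}_i$ to witness every positive inequality of $\psi_j$, so that after disjointly gluing supports no cross-slice probability arithmetic is ever required.
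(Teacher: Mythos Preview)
Your proposal is correct and follows essentially the same argument as the paper: reduce to a depth-$\leq 1$ disjunct of length $O(|\theta|)$, extract one witnessing measure per negated inequality, apply Theorem~\ref{Complexity1} to each formula $\neg(\alpha_i\succsim\beta_i)\wedge\xi$, and take the disjoint union with zero-extended measures. Your write-up is in fact more explicit than the paper's about why the glued model satisfies $\xi$ under every $\mu_k'$; the only expositional wrinkle is that $\psi_j$ is not itself equivalent to a single conjunction but to a DNF, from which you then select a satisfied conjunct---exactly as Lemma~\ref{FinitelyMany} does, and as the paper's proof makes explicit.
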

\begin{proof}
Given a formula $\theta$, applying Lemma \ref{depth0} and putting each disjunct in disjunctive normal form, as in Lemma \ref{FinitelyMany} we can rewrite $\theta$ as a disjunction of formulas
\[ \neg (\varphi_1 \succsim \psi_1) \wedge \cdots \wedge \neg (\varphi_n \succsim \psi_n) \wedge \xi\]
of modal depth $\leq 1$, where $\xi$ is a conjunction of modal depth 0 formulas and inequalities $\varphi \succsim \psi$ between modal depth 0 formulas. By Lemma \ref{depth0}, the length of each disjunct is $O(|\theta|)$. Since $\theta$ is satisfiable iff one of the disjuncts is, it suffices to show that if a disjunct  $\varphi$ is satisfiable then it is satisfiable in a model of bounded size as in the statement of the theorem. 

If the disjunct
\[\varphi:= \neg (\varphi_1 \succsim \psi_1) \wedge \cdots \wedge \neg (\varphi_n \succsim \psi_n) \wedge \xi\]
is satisfiable in a finite multi-measure model $\mathfrak{M} = \langle W,\mathcal{P},V\rangle $ at a state $w$, then for each $i\leq n$, there is a measure $\mu_i$ such that $\mu_i (\llbracket \varphi_i \rrbracket^{\mathfrak{M}}) < \mu_i (\llbracket \psi_i \rrbracket^{\mathfrak{M}})$. Then $\langle W,\mu_i,V\rangle,w \vDash\neg (\varphi_i \succsim \psi_i) \wedge \xi$. So each formula $\neg (\varphi_i \succsim \psi_i) \wedge \xi$ is satisfiable in a finite single-measure model. Thus, by Theorem \ref{Complexity1}, each  $\neg (\varphi_i \succsim \psi_i) \wedge \xi$ is satisfiable at $w_i$ in a finite single-measure model $\mathfrak{M}_i^* = \langle W_i^*,\mu_i^*,V_i^*\rangle$ with at most $O(|\varphi|)$ states and where the probability assigned to each state is a rational number with size $O(|\varphi| \log |\varphi|)$.

Let $W^*$ be the disjoint union of the $W_i^*$. We can extend each $\mu_i^*$ to all of $W^*$ by having $\mu_i^*$ assign measure zero to $W^* - W_i^*$. Let $\mathcal{P}^*$ be the set of the $\mu_i^*$, and let $V^*$ be the valuation on $W^*$ such that $V^*(p)$ is the union of the $V_i^*(p)$'s. Let $\mathfrak{M}^* = (W^*,\mathcal{P}^*,V^*)$. Then for any $w = w_i$,
\[ \mathfrak{M}^*,w \vDash\neg (\varphi_1 \succsim \psi_1) \wedge \cdots \wedge \neg (\varphi_n \succsim \psi_n) \wedge \xi.\]
As $|W^*| \leq O(|\varphi|^2)$ and $|\mathcal{P}^*| \leq O(|\varphi|)$, we are done.
\end{proof}

From Theorem \ref{SmallModels}, the complexity result easily follows.

\begin{theorem}
The problem of deciding whether a formula is consistent in the logic $\mathsf{IP}$ is NP-complete.
\end{theorem}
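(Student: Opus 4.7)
The plan is to establish NP-completeness by the standard two-step argument: NP-hardness by reduction from propositional satisfiability, and NP-membership by a guess-and-check procedure driven by Theorem \ref{SmallModels}.

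For NP-hardness, note that the language $\mathcal{L}$ subsumes propositional logic: every purely Boolean $\varphi\in\mathcal{L}$ (of modal depth $0$) is consistent in $\mathsf{IP}$ if and only if $\varphi$ is classically satisfiable. The forward direction is immediate from soundness, and the converse holds because any propositional valuation realizing $\varphi$ corresponds to a state in some multi-measure model (e.g.\ take $\mathcal{P}$ to be any singleton; the modal clauses do not affect purely propositional formulas). Since propositional satisfiability is NP-hard, so is consistency in $\mathsf{IP}$.

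For NP-membership, suppose $\varphi$ is consistent, i.e.\ $\neg\varphi\notin\mathsf{IP}$. By completeness with respect to finite multi-measure models (Alon and Heifetz 2014), $\varphi$ is satisfied at some state of some finite multi-measure model. Theorem \ref{SmallModels} then supplies a model $\mathfrak{M}=\langle W,\mathcal{P},V\rangle$ and a state $w\in W$ satisfying $\varphi$ with $|W|\leq O(|\varphi|^2)$, $|\mathcal{P}|\leq O(|\varphi|)$, and $\mu(\{s\})$ a rational of bit-size $O(|\varphi|\log|\varphi|)$ for each $\mu\in\mathcal{P}$ and $s\in W$. The total description length of $(\mathfrak{M},w)$ is therefore polynomial in $|\varphi|$, so an NP-machine can nondeterministically guess it.

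Verification that $\mathfrak{M},w\vDash\varphi$ proceeds by structural recursion on the subformulas of $\varphi$, computing $\llbracket\alpha\rrbracket^{\mathfrak{M}}\subseteq W$ for each subformula $\alpha$. The atomic and Boolean cases are immediate from $V$ and set operations on $W$. For a subformula $\alpha\succsim\beta$, having already computed $\llbracket\alpha\rrbracket^{\mathfrak{M}}$ and $\llbracket\beta\rrbracket^{\mathfrak{M}}$, we compute for each $\mu\in\mathcal{P}$ the sums $\mu(\llbracket\alpha\rrbracket^{\mathfrak{M}})=\sum_{s\in\llbracket\alpha\rrbracket^{\mathfrak{M}}}\mu(\{s\})$ and similarly for $\beta$, and check whether the former is $\geq$ the latter for every $\mu$. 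Using a common denominator across each measure, these sums and comparisons involve rationals whose bit-length remains polynomial in $|\varphi|$, and there are only polynomially many $(\mu,\alpha)$ pairs to process; hence the whole verification runs in polynomial time, putting the problem in NP.

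There is no real obstacle here, since Theorem \ref{SmallModels} already does the substantive work of bounding model size. The only point requiring minor care is the bit-complexity of the arithmetic during verification: additions and comparisons of the guessed rationals must be carried out in a way that keeps numerators and denominators polynomially bounded (working with a common denominator for each $\mu$ suffices), but this is routine.
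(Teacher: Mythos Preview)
Your proof is correct and follows essentially the same approach as the paper: NP-hardness via the embedding of propositional satisfiability, and NP-membership via the small-model certificate supplied by Theorem \ref{SmallModels} together with polynomial-time model-checking. You simply spell out in more detail what the paper compresses into two sentences.
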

\begin{proof}
The problem is clearly NP-hard, as it generalizes the satisfiability problem for propositional logic. It is not hard to see that it is in NP; a certificate which witnesses that a formula $\theta$ is satisfiable is the small model from Theorem \ref{SmallModels}, which is of polynomially bounded size in $|\theta|$; and we can check in polynomial time that $\theta$ is true in this model.
\end{proof}

\section{Discussion}

As we have shown, the preferential approach to reasoning about relative likelihood is quite compatible with a probabilistic approach. Beginning with a preorder on states, we can reason about event comparisons in a way that coheres perfectly with quantitative probabilistic reasoning.

These results are of interest not only for the field of representation and reasoning within AI, but also in other domains such as theoretical linguistics. In the area of natural language semantics, for example, what we have called the \emph{inflationary function} semantics was independently proposed as a model of epistemic comparatives, e.g., ``at least as likely as'' in English \cite{Kratzer1991}. While this is evidently the dominant approach in the field, several authors have argued that the validity of principle (\textbf{L4}) disqualifies it as a model of ordinary speakers' intuitions about relative likelihood. The specific question has been raised as to whether there is an alternative lifting operation---that is, a different preferential approach---whose logic fits better with ordinary intuitions \cite{Yalcin2010}. Insofar as \textsf{IP}, the logic of sets of measures, matches intuitions, our results answer this question.

More generally, the present work can be seen as part of a broader project to explore possible ways of unifying logical and probabilistic approaches to AI \cite{Russell}. In addition to combining logical and probabilistic tools, another important strand of this project is to clarify when well-understood qualitative tools---such as preferential structures---and familiar forms of probabilistic reasoning---such as reasoning about sets of measures---can already be seen as two sides of the same logical coin. 

\bibliographystyle{aaai}
\bibliography{Preferential}
            
\end{document}